\providecommand{\customgenericname}{}
\newcommand{\newcustomtheorem}[2]{%
  \newenvironment{#1}[1]
  {%
   \renewcommand\customgenericname{#2}%
   \renewcommand\theinnercustomgeneric{##1}%
   \innercustomgeneric
  }
  {\endinnercustomgeneric}
}
\newtheorem{prop}{Proposition}
\newtheorem{defin}{Definition}
\newtheorem*{prop*}{Proposition}
\newtheorem*{obs*}{Observation}
\newtheorem*{theorem*}{Theorem}
\newtheorem*{corollary*}{Corollary}
\title{Simulation-Free Training of\\Neural ODEs on Paired Data}
\author{%
    \parbox{0.7\linewidth}{
    \vspace{-0.1cm}
    \centering
        \begin{tabular}{ccc}
        ~Semin Kim$^1$\thanks{Equal Contribution.}   & ~~Jaehoon Yoo$^{1*}$     & Jinwoo Kim$^{1}$      \\
        Yeonwoo Cha$^{1}$                           & Saehoon Kim$^{2}$     & Seunghoon Hong$^{1}$  \\
        \end{tabular}
    }\vspace{0.12cm}\\
    $^1$KAIST\hspace{1.5em}
    $^2$Kakao Brain
    \vspace{-0.1cm}
}
\begin{document}

\maketitle

\begin{abstract}
In this work, we investigate a method for simulation-free training of Neural Ordinary Differential Equations (NODEs) for learning deterministic mappings between paired data. 
Despite the analogy of NODEs as continuous-depth residual networks, their application in typical supervised learning tasks has not been popular, mainly due to the large number of function evaluations required by ODE solvers and numerical instability in gradient estimation.
To alleviate this problem, we employ the flow matching framework for simulation-free training of NODEs, which directly regresses the parameterized dynamics function to a predefined target velocity field.
Contrary to generative tasks, however, we show that applying flow matching directly between paired data can often lead to an ill-defined flow that breaks the coupling of the data pairs (\emph{e.g.}, due to crossing trajectories).
We propose a simple extension that applies flow matching in the embedding space of data pairs, where the embeddings are learned jointly with the dynamic function to ensure the validity of the flow which is also easier to learn.
We demonstrate the effectiveness of our method on both regression and classification tasks, where our method outperforms existing NODEs with a significantly lower number of function evaluations.
The code is available at \url{https://github.com/seminkim/simulation-free-node}.

\end{abstract}

\section{Introduction}
Continuous-depth models~\cite{bai2019deq, chen2018neuralode} have received growing attention as an alternative to deep feedforward networks composed of a stack of discrete layers.
As they approximate continuous, infinite-depth models with a constant set of model parameters, they are parameter-efficient models that can reuse their parameters across depth. 
Additionally, by controlling the number of function evaluations (NFEs) during inference, we can choose the optimal trade-off between performance and computational cost. 
This trade-off can be adjusted without retraining the model, unlike conventional neural networks that require separately trained models of different capacities to achieve a similar flexibility.

A prominent class of continuous-depth models is Neural Ordinary Differential Equations (NODEs)~\cite{chen2018neuralode}, which utilize continuous-time differential equations to describe evolutions of intermediate states.
In NODEs, a parameterized dynamics function learns the time derivative of the continuous transformation of a state.
NODEs can be interpreted as a continuous limit of residual networks~\citep{chen2018neuralode, haber2018}, and hence they offer a versatile framework similarly to ResNet~\cite{he2016resnet}.
As a result, NODEs have been successfully applied to tasks such as physically informed modeling~\citep{sanchez2019hogn, desmond2020symoden}, time series modeling~\citep{brouwer2019gruodebayes, kidger20ncde, rubanova2019latentode}, and generative modeling with normalizing flows~\citep{finlay2020rnode, grathwohl2019ffjord}.

However, despite their success in various tasks, the application of NODEs to learn deterministic mappings between paired data, such as regression or classification, remains under-explored. 
This is largely due to the substantial computational burden of training NODEs, since numerical ODE solving during training is inherently serial, slow, and requires large NFEs. 


Recently, an alternative, \emph{simulation-free} training method has been introduced for ODE-based generative models~\citep{albergo2023si, lipman2022cfm, liu2022rectifiedflow}. 
Known as flow matching, this approach eliminates the need for the expensive ODE solving during training by directly regressing the model on a presumed velocity field. 
While this method significantly improves training efficiency by requiring only one function evaluation per training step, its effectiveness on deterministic tasks has not been well explored.

In this work, we investigate a simulation-free training method for continuous-depth models in learning deterministic mapping between paired data.
We first identify potential problems that occur when applying flow matching objective to NODEs, and then propose a method to mitigate them.
With our simulation-free training scheme, we can significantly reduce the computational demands of NODE training while maintaining competitive performance in deterministic tasks. 
We demonstrate the advantages of our method on both regression and classification problems, providing empirical evidence of its effectiveness and versatility in common supervised learning settings. 
Furthermore, by leveraging simple flows, our method can achieve superior performance compared to NODEs in low-NFE regime.

\section{Preliminary}
\label{sec:preliminary}
\paragraph{NODEs as Continuous-Depth Models}
We aim to learn a deterministic mapping between paired data $\mathcal{D}=\{x^{(i)}, y^{(i)}\}_{i=1}^N$ with a continuous-depth model.
To this end, we consider Neural Ordinary Differential Equations (NODEs)~\cite{chen2018neuralode}, which can be regarded as a continuous limit of residual networks. 
Formally, a single layer of a residual network transforms a hidden state $z$ with a residual connection, \emph{i.e.} $z_{t+1} = z_{t} + h_\theta(z_{t})$. 
This update resembles a single step of Euler solver, which uses parameterized dynamics function $h_\theta$ to approximate the derivative of $z$ with respective to $t$. 
Taking this discretization to the continuous limit, residual networks are equivalent to the following ODE:
\begin{equation} \label{eqn:ode}
    v_t = \frac{\mathrm{d}z_t}{\mathrm{d}t} =  h_\theta(z_t, t).
\end{equation}
To train a NODE with a loss defined on the state at time $t_2$, we first solve an initial value problem starting from a known initial state $z_{t_1}$:
\begin{equation}\label{eqn:ode_ivp}
    z_{t_2} = z_{t_1} + \int_{t_1}^{t_2} h_\theta(z_{t}, t) dt = \texttt{ODESolve}(z_{t_1}, h_\theta, t_1, t_2).
\end{equation}
For the task of fitting a paired dataset, each data-label pair $(x, y)$ is placed on the state-space of the ODE as endpoints $(z_0, z_1)$ for time interval $[0,1]$.
This requires matching the dimensions of data $x$ and label $y$ to states $z$, which is done by projecting data $x$ to initial state $z_0=f_\phi(x)$,\footnote{A typical choice for $f$ in the literature is the identity mapping (\emph{i.e.}, $z_0=x$).}
and the solved final state $z_1$ to predicted label $\hat{y}=d_\psi(z_1)$.
Then, NODEs are trained end-to-end to minimize the loss $\mathcal{L}(\hat{y}, y)$.
However, this approach is inherently slow and computationally intensive, as it requires a large number of sequential function evaluations by the ODE solver~\cite{chen2018neuralode, kidger2022onnodes, pal2021nde}.
There is also a hidden cost in integrating the ODE backward in time for gradient estimation (\emph{i.e.}, adjoint method ~\cite{chen2018neuralode}), which necessitates additional serial function evaluations and is numerically unstable~\cite{zhuang2021mali}.
Also, common choices of adaptive-step ODE solvers tend to make NODEs learn arbitrarily complex trajectories~\cite{zhuang2020aca}, making both training and inference computationally expensive.

\paragraph{Flow Matching for Simulation-Free Training} 
Instead of optimizing Eq.~\eqref{eqn:ode} via the expensive initial value problem of Eq.~\eqref{eqn:ode_ivp}, we can alternatively employ the flow matching framework~\cite{albergo2023si, lipman2022cfm, liu2022rectifiedflow} to directly regress the dynamics function $h_\theta$ to the velocity field $v_t$ by:
\begin{equation} \label{eq:flow_matching}
    \mathbb{E}_{t} [||h_\theta(z_t, t) - v_t||_2^2].
\end{equation}
Since the intermediate state $z_t$ and its velocity $v_t$ are generally unknown and intractable to compute, existing works~\citep{albergo2023si, lipman2022cfm, liu2022rectifiedflow} employ predefined, tractable conditional velocity fields defined per sample.
Specifically, a closed form expression of $z_t$ is given as an interpolant of two endpoints and time (\emph{i.e.} $z_t=\alpha_t z_0 + \beta_t z_1$), and a target velocity field is constructed as a time derivative of the interpolant.
One simple instantiation of an interpolant and its corresponding target velocity field is \textit{linear} dynamics with a constant speed~\cite{lipman2022cfm, liu2022rectifiedflow}:
\begin{align} \label{eq:linear_ansatz}
    z_t = (1-t)z_0 + t z_1, ~~~~v_t = z_1 - z_0.
\end{align}
Since we can obtain $v_t$ at arbitrary time $t$, this allows the dynamics function to be trained in a simulation-free manner, where the regression of the vector field is performed parallel in time.
Simulation-free training in the flow matching models is intriguing, since there is no need for serialized function evaluations during training as well as backpropagation through time. 
Employing a simple velocity field such as Eq.~\eqref{eq:linear_ansatz} also greatly simplifies the trajectory of the dynamics function, reducing the number of function evaluations for inference. 
However, flow matching has been mainly studied for generative modeling, which aims to find a transportation map between two marginal distributions $p(z_0)$ and $p(z_1)$ while learning arbitrary \emph{per-sample} couplings between $z_0\sim p(z_0)$ and $z_1\sim p(z_1)$.
It makes it difficult to be applied in deterministic regression tasks where the per-sample coupling is defined by data pairs $(z_0,z_1)\sim(x,y)$.
We elaborate on this issue in the next section.

\section{Challenges in Flow Matching for Paired Data} 
\label{sec:conflict}

\begin{figure}[t]
    \centering
    \begin{subfigure}[t]{0.24\linewidth}
        \centering
        \includegraphics[width=1.0\linewidth]{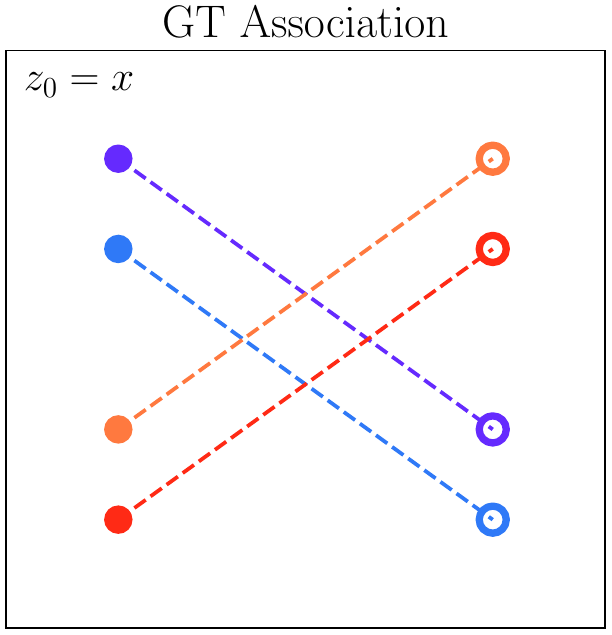}
        \subcaption{Ground Truth}
    \end{subfigure}
    \begin{subfigure}[t]{0.24\linewidth}
        \centering
        \includegraphics[width=1.0\linewidth]{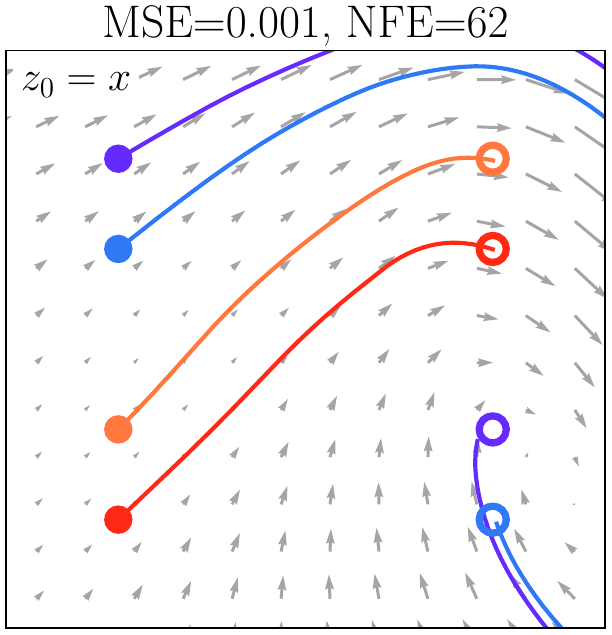}
        \subcaption{Neural ODE}
    \end{subfigure}
    \begin{subfigure}[t]{0.24\linewidth}
        \centering
        \includegraphics[width=1.0\linewidth]{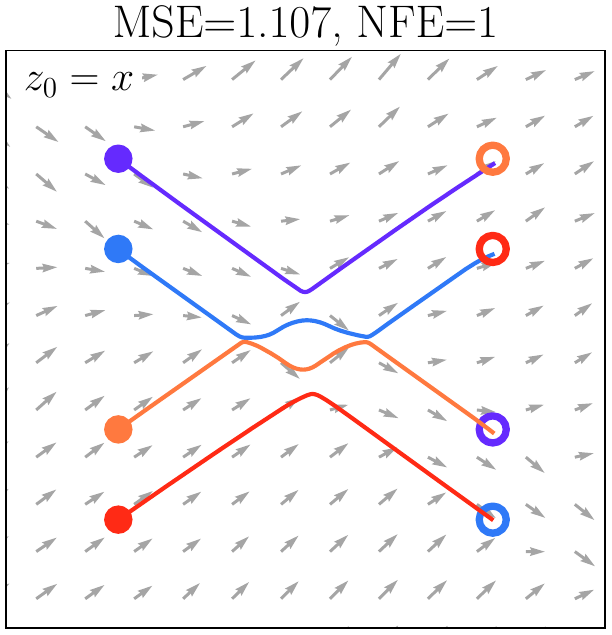}
        \subcaption{Flow Matching (FM)}
    \end{subfigure}
    \begin{subfigure}[t]{0.24\linewidth}
        \centering
        \includegraphics[width=1.0\linewidth]{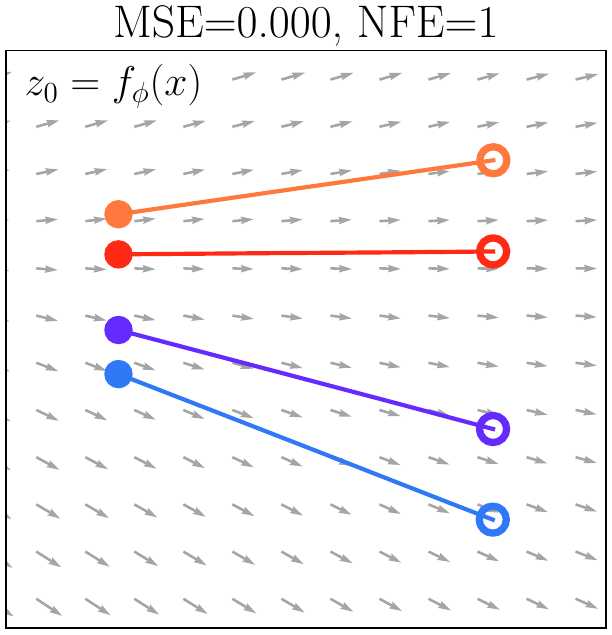}
        \subcaption{Embed. + FM (Ours)}
    \end{subfigure}
    \caption{
            Comparison of the learned trajectories. 
            The final train loss (MSE) and training NFE are shown above each plot.
            (a) We consider deterministic regression task of four data pairs, each of which is represented by two circles (filled and empty circles) connected by dotted lines. 
            (b) NODEs can correctly associate the pairs but through complex paths (solid lines) that require large NFEs. 
            (c) Flow matching with linear velocity can greatly reduce the training NFEs by simulation-free training, but fails to associate the correct pairs due to the crossing trajectories induced by predefined dynamics. 
            (d) The proposed method can alleviate the problems by learning the embeddings for data jointly with the flow matching.}
    \label{fig:toy-regression}
\end{figure}

We seek to adopt the simulation-free training objective in Eq.~\eqref{eq:flow_matching} to learn a continuous-depth model of Eq.~\eqref{eqn:ode}.  
However, we find that careful consideration is required when employing the flow-matching objective to model a deterministic mapping between paired data. 

To illustrate this point, we provide a toy example in Fig.~\ref{fig:toy-regression}.
Here, we consider a simple regression task with two-dimensional inputs and outputs, translating four points at $x=-1$ to the ones at $x=1$ but in a different order in the $y$ axis. 
For ease of analysis, we consider simple linear velocity field in Eq.~\eqref{eq:linear_ansatz} for flow matching.

First, we observe that NODE successfully learns to associate inputs and outputs by solving Eq.~\eqref{eqn:ode_ivp} (Fig.~\ref{fig:toy-regression}~(b)).
However, the learned trajectories are highly non-linear and complex which leads to a large number of function evaluations for training and inference. 

The flow matching can greatly improve the computation cost by optimizing the simulation-free objective in Eq.~\eqref{eq:flow_matching}.
However, we observe that it fails to infer the correct input-output correspondence according to the learned trajectories (Fig.~\ref{fig:toy-regression}~(c)).
This is because some target trajectories induced by the predefined velocity field are crossing each other, which cannot be modeled by ODEs~\cite{younes2010shapes}.
As a result, the learned dynamics function at the intersection of crossing trajectories produces their mean velocity, which makes the inferred trajectory fail to arrive at the correct output.



Note that the crossing trajectories induced by the predefined flow are less problematic in generative tasks. 
It is because their objective is to transport between two marginal distributions $p(z_0)$ and $p(z_1)$ while allowing arbitrary association between samples $z_0\sim p(z_0)$ and $z_1\sim p(z_1)$.
It is evident from Fig.~\ref{fig:toy-regression}~(c), where the model fails to preserve the initial coupling in the dataset but still yields valid marginal distribution at the output. 
However, it is not desirable for deterministic regression, since the goal is to preserve \emph{per-sample} correspondence between data and label.

\section{End-to-End Latent Flow Matching} \label{sec:method}
Previous discussions suggest that predefined flow between paired data can yield crossing trajectories that cannot be modeled by the well-defined ODE, which leads to invalid paths by the dynamics function that break the coupling of data and label. 
As a simple solution, we propose to learn the embeddings of data and label jointly with dynamics function, in such a way that the predefined flow induces non-crossing trajectories in the embedding space (Fig.~\ref{fig:toy-regression}~(d)). 

Fig.~\ref{fig:overview} illustrates the overall framework. 
The proposed framework comprises the data encoder $f_\phi$, label encoder $g_\varphi$, label decoder $d_\psi\approx g_\varphi^{-1}$, and dynamics function $h_\theta$.
Given a data pair $(x,y)$, our model first projects the data and label using respective encoders by $(z_0,z_1)=(f_\phi(x), g_\varphi(y))$. 
Then, for all $t\in[0,1]$, the state $z_t$ and the corresponding target velocity $v_t$ are obtained by some predefined dynamics (\emph{e.g.,} Eq.~\eqref{eq:linear_ansatz}). 
We train the encoders and dynamics function jointly using the flow matching loss, while an additional label autoencoding objective is applied to the label encoder and decoder. 
The inference procedure remains similar to that of NODEs: given the input embedding $z_0=f_\phi(x)$, we obtain the state $\hat{z}_1$ by solving ODE in Eq.~\eqref{eqn:ode_ivp} and decode it to a label by the label decoder $\hat{y}=d_\psi(z_1)$.
Below, we describe the learning objective and optimization of our method in detail.

\begin{figure}[t]
\begin{center}
\includegraphics[width=0.7\linewidth]{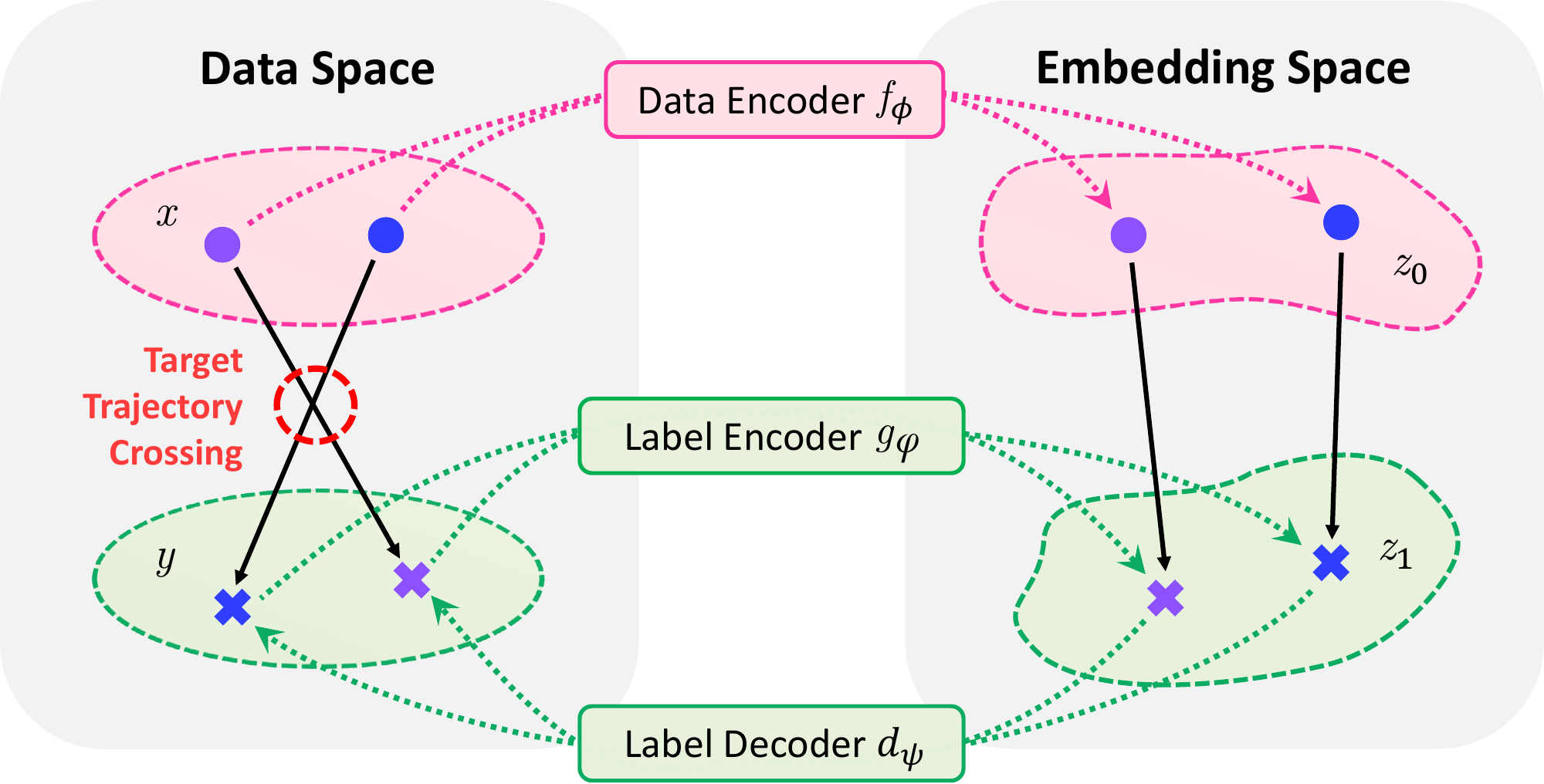}
\end{center}
\caption{
An overview of our framework. We avoid the crossing trajectory problem in data space by introducing learnable encoders that project data and label to embedding space. In the learned embedding space, the presumed dynamics induce valid target velocity field.
}
\label{fig:overview}
\end{figure}

\paragraph{Flow Loss}
With the learnable projections $z_0=f_\phi(x), z_1=g_{\varphi}(y)$ and any simple dynamics assumption $z_t=\alpha_t z_0 + \beta_t z_1$, our flow matching objective is defined by:
\begin{equation} \label{eq:flow_loss}
\mathcal{L}_{flow}(\theta, \phi, \varphi) = \mathbb{E}_{t, (x,y)\sim\mathcal{D}} [||h_\theta(z_t, t) - v_t||_2^2].
\end{equation}

In contrast to conventional flow matching (Eq.~\eqref{eq:flow_matching}) that optimizes the dynamics function given the fixed endpoints $z_0$ and $z_1$, the loss in Eq.~\eqref{eq:flow_loss} is optimized jointly with two encoders $f_\phi$ and $g_\varphi$. 
Since the loss has the optimum at $\mathcal{L}_{flow}(\theta^*,\phi^*,\varphi^*)=0$ when the dynamics function $h_{\theta^*}$ perfectly fits to the velocity field $v_t$, optimizing Eq.~\eqref{eq:flow_loss} guarantees that the optimal encoders $f_{\phi^*}$ and $g_{\varphi^*}$ induce non-crossing target trajectories in the embedding space.

Conceptually, our approach resembles the \textit{reflow} procedure in Rectified Flow~\cite{liu2022rectifiedflow}, which straightens the target trajectories by recursively rewiring $z_0$ and $z_1$ according to the trajectory obtained by solving an ODE. 
In contrast, our approach straightens the trajectory by learning the encoders such that the trajectories defined by the predefined flow do not cross in the embedding space, which does not involve ODE solving and, most importantly, preserves the initial coupling $(z_0,z_1)$.

However, contrary to Eq.~\eqref{eq:flow_matching}, we observe that optimizing the flow matching objective with learnable encoders in Eq.~\eqref{eq:flow_loss} can lead to trivial degenerate solutions.
One such example is when both encoders collapse to output constant ignoring data, which produces trivial targets for the dynamic model (\emph{e.g.}, $z_0=z_1=h_\theta(\cdot, \cdot)=0$).
To avoid trivial solutions, both encoders require strong regularization to be relevant to the inputs. 
Fortunately, such regularization is naturally provided by the learning objective for the label decoder $d_\psi$, which is described below.

\paragraph{Label Autoencoding Loss}
We define the \textit{label autoencoding loss} as a simple mean squared error (MSE) between the label $y$ and its reconstruction obtained by label encoder and decoder:
\begin{equation} \label{eq:label_ae_loss}
    \mathcal{L}_{label\_ae}(\psi, \varphi) = \mathbb{E} [||d_{\psi}( g_{\varphi}(y) ) - y||_2^2].
\end{equation}
The primary purpose of Eq.~\eqref{eq:label_ae_loss} is to provide learning signals for the label decoder $d_\psi$, which is used to decode the predicted $z_1$ to the label $\hat{y}=d_\psi(z_1)$ at inference.
However, Eq.~\eqref{eq:label_ae_loss} also functions as a regularization to avoid trivial solutions of Eq.~\eqref{eq:flow_loss} by preventing the label encoder $g_\varphi$ from ignoring the input label $y$. 
Regularizing the label encoder is sufficient to prevent trivial solutions in principle, since it eliminates the trivial targets for both data encoders $f_\phi$ and the dynamics function $h_\theta$.



\paragraph{Objective Function}

The overall objective function for the proposed framework is given by combining the two losses introduced above:
\begin{equation} \label{overall_objective}
    \min _{\theta, \phi, \varphi, \psi} \mathcal{L}_{flow}(\theta, \phi, \varphi) + \mathcal{L}_{label\_ae}(\psi, \varphi)
\end{equation}
Theoretically, we can show that optimizing the combination of flow loss and autoencoding loss in Eq.~\ref{overall_objective} can indeed mitigate the problem of target trajectory crossing:


\begin{prop}
\label{prop:1}
    There exist $f_\phi$ and $g_\varphi$ that induce non-crossing target trajectory for any data pair in $\mathcal{D}$ while minimizing $\mathcal{L}_{label\_ae}$.
\end{prop}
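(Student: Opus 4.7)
The plan is to establish Proposition~\ref{prop:1} by an explicit construction. The key observation is that $\mathcal{L}_{label\_ae}$ only couples $g_\varphi$ to $d_\psi$ and leaves $f_\phi$ entirely free, so I can exploit this freedom to collapse each target trajectory into a configuration where crossings are trivially ruled out.

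First I minimise the label autoencoding loss. I pick $g_\varphi$ to be any map that is injective on the label support (for instance, an affine embedding of $y$ into the first few coordinates of $\mathbb{R}^d$), and define $d_\psi$ to be a left inverse of $g_\varphi$ on its image. This gives $d_\psi(g_\varphi(y^{(i)})) = y^{(i)}$ for every sample, driving $\mathcal{L}_{label\_ae}$ to its infimum of $0$.

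Next I choose $f_\phi$ so that $f_\phi(x^{(i)}) = g_\varphi(y^{(i)})$ for every pair in $\mathcal{D}$; since the training inputs $x^{(i)}$ are distinct, such a function always exists and can be realised by a sufficiently expressive network. Under the linear interpolant of Eq.~\eqref{eq:linear_ansatz} (or any interpolant with $\alpha_t + \beta_t \equiv 1$), this choice forces each target trajectory to be a constant path $z_t^{(i)} \equiv g_\varphi(y^{(i)})$ with target velocity $v_t^{(i)} \equiv 0$. If two pairs $i,j$ have distinct labels, injectivity of $g_\varphi$ keeps their trajectories at distinct points for all $t$, so they never intersect; if two pairs share a label, the trajectories coincide identically and both carry velocity $0$. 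In either case the induced conditional velocity field remains a well-defined function of $(z,t)$ on the union of trajectory supports, which is precisely the non-crossing condition that the ODE requires.

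\textbf{Main obstacle.} Since the statement is purely existential, no genuine technical barrier appears; the one subtle point is to fix the operational meaning of "non-crossing" as the well-definedness of the target velocity field at every $(z,t)$ touched by some trajectory. The construction above is, by design, a boundary case (the fitted $h_\theta$ is identically zero) that resembles the degenerate regime warned about in the preceding paragraph. If a non-trivial witness is preferred, the same argument goes through by appending an auxiliary coordinate uniquely indexing each label and placing the pairs on distinct parallel lines in $\mathbb{R}^{d+1}$: this yields genuinely moving, non-crossing trajectories while still minimising $\mathcal{L}_{label\_ae}$, which I would mention as a remark after the main proof.
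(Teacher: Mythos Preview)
Your construction is correct and does establish the existence claim, but it takes a different route from the paper. The paper's proof splits the latent space $\mathbb{R}^d$ into two complementary subspaces $\mathrm{span}(\mathbb{J})$ and $\mathrm{span}(\mathbb{K})$: $g_\varphi$ is chosen injective with image in $\mathrm{span}(\mathbb{J})$, while $f_\phi$ is chosen so that $\mathrm{proj}_{\mathrm{span}(\mathbb{K})}\circ f_\phi$ is injective. A purported crossing $\alpha_t(f_\phi(x)-f_\phi(x'))+\beta_t(g_\varphi(y)-g_\varphi(y'))=0$ then projects to zero on $\mathrm{span}(\mathbb{K})$, forcing $x=x'$ whenever $\alpha_t\neq 0$, and symmetrically $y=y'$ whenever $\beta_t\neq 0$; this covers all $t$ under the paper's standing assumption that $\alpha_t,\beta_t$ are nonzero on $(0,1)$.

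Your approach instead collapses every trajectory to a single point via $f_\phi(x^{(i)})=g_\varphi(y^{(i)})$. This is shorter and avoids the subspace bookkeeping, but two points are worth noting. First, it relies on $\alpha_t+\beta_t\neq 0$, which holds for every interpolant the paper actually uses but is not implied by the formal hypothesis that $\alpha_t,\beta_t$ are individually nonzero on $(0,1)$; the paper's orthogonal-subspace argument does not need this extra condition. Second, the witness you exhibit is precisely the degenerate collapse ($z_0=z_1$, $h_\theta\equiv 0$) that the paragraph immediately preceding the proposition flags as the trivial solution to be avoided. This is not a logical defect for a pure existence statement, and you flag it yourself, but the paper's construction deliberately supplies a non-degenerate witness; the alternative you sketch at the end (reserving an extra coordinate to separate trajectories onto parallel lines) is much closer in spirit to what the paper actually does.
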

\begin{proof}
    The proof can be found in App.~\ref{app:proof_prop1}.
\end{proof}

\begin{prop}
\label{prop:2}
    If $g_\varphi$ is injective, the following equivalence holds: $f_\phi,g_\varphi$ and $h_\theta$ minimize $\mathcal{L}_{flow}$ to zero for all $t\in[0, 1)$ if and only if $f_\phi$ and $g_\varphi$ induce non-crossing target trajectory and $h_\theta$ perfectly fits the induced target velocity, for any data pair in $\mathcal{D}$.
\end{prop}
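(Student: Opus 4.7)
The plan is to prove the two directions separately; the substantive content lies in the fact that $h_\theta$ is a single-valued function of $(z,t)$. For the $i$-th data pair $(x^{(i)},y^{(i)})\in\mathcal{D}$, write $z_0^{(i)}=f_\phi(x^{(i)})$, $z_1^{(i)}=g_\varphi(y^{(i)})$, $z_t^{(i)}=\alpha_t z_0^{(i)}+\beta_t z_1^{(i)}$, and $v_t^{(i)}$ for the target velocity fixed by the chosen interpolant (e.g.\ Eq.~\eqref{eq:linear_ansatz}). I will take the formal definition of non-crossing target trajectory to be: for every pair of indices $i,j$ and every $t^*\in[0,1)$, $z_{t^*}^{(i)}=z_{t^*}^{(j)}$ implies $v_{t^*}^{(i)}=v_{t^*}^{(j)}$, i.e.\ whenever two trajectories coincide in space and time they also agree infinitesimally.

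For the $(\Leftarrow)$ direction, assume the non-crossing property and a perfect fit $h_\theta(z_t^{(i)},t)=v_t^{(i)}$ for every $i$ and $t\in[0,1)$. The non-crossing property is exactly what makes this assignment of $h_\theta$ well-defined across the dataset: whenever two inputs $(z_t^{(i)},t)$ coincide, so do the targets. Substituting into Eq.~\eqref{eq:flow_loss} then makes every summand in the expectation vanish, so $\mathcal{L}_{flow}=0$ pointwise at each $t\in[0,1)$.

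For the $(\Rightarrow)$ direction, suppose $\mathcal{L}_{flow}=0$ for all $t\in[0,1)$. Since the loss at each $t$ is an expectation of nonnegative squared norms over a finite dataset, zeroness forces $h_\theta(z_t^{(i)},t)=v_t^{(i)}$ for every $i$ and $t\in[0,1)$, which is the perfect-fit half of the conclusion. For non-crossing, I argue by contradiction: if some $i\neq j$ and $t^*\in[0,1)$ satisfied $z_{t^*}^{(i)}=z_{t^*}^{(j)}$ but $v_{t^*}^{(i)}\neq v_{t^*}^{(j)}$, the perfect fit would require $h_\theta(z_{t^*}^{(i)},t^*)$ to equal two distinct vectors, contradicting single-valuedness. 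The injectivity of $g_\varphi$ enters by guaranteeing $z_1^{(i)}\neq z_1^{(j)}$ whenever $y^{(i)}\neq y^{(j)}$, so the only place where distinct-label trajectories could meet unavoidably is the terminal $t=1$; this is precisely why the hypothesis excludes $t=1$ and why the equivalence is stated on $[0,1)$.

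The main obstacle is, in my view, only definitional: one must commit to a precise meaning of "non-crossing" and explain the exclusion of $t=1$ from the range over which zero loss is required. Once those choices are made, the proof reduces to the tautology that a function cannot take two values at one input, promoted to a dataset-level statement through the injectivity of $g_\varphi$.
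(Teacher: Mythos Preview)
Your argument follows the same two-direction structure as the paper, and the core observation---that zero flow loss forces $h_\theta$ to be single-valued on coinciding $(z_t,t)$, so the target velocities must agree there---is exactly what the paper uses for the $(\Rightarrow)$ contradiction. Two points of divergence are worth flagging.

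First, the paper commits (in the appendix) to a different formal definition of crossing than yours: $(f_\phi,g_\varphi)$ induce a crossing if there exists $(t,x,y,x',y')$ with $x\neq x'$, $y\neq y'$, and $\alpha_t f_\phi(x)+\beta_t g_\varphi(y)=\alpha_t f_\phi(x')+\beta_t g_\varphi(y')$. This is purely a position-coincidence condition, not your velocity-disagreement condition. Under the paper's definition, the contradiction in the $(\Rightarrow)$ direction needs one more step: that at a crossing the two target velocities actually differ. That is precisely where injectivity of $g_\varphi$ enters---$y\neq y'$ gives $z_1\neq z_1'$, and for the interpolants considered the map $(z_0,z_1)\mapsto(z_t,v_t)$ is invertible on $t\in[0,1)$, so equal positions with equal velocities would force $z_1=z_1'$. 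The paper's proof simply asserts $\frac{d}{dt}F(z_0,z_1,t)\neq\frac{d}{dt}F(z_0',z_1',t)$ at the crossing without spelling this out; your definition bakes the velocity disagreement into ``crossing'' itself, which makes your argument cleaner but also renders the injectivity hypothesis unnecessary for the biconditional as you have stated it.

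Second, your remark about $t=1$ has the cases reversed. Injectivity guarantees distinct-label trajectories do \emph{not} meet at $t=1$, since their endpoints $z_1$ differ. It is \emph{same}-label pairs $(x^{(i)},y)$ and $(x^{(j)},y)$ with $x^{(i)}\neq x^{(j)}$ that are forced to share $z_1=g_\varphi(y)$ while carrying different velocities there, which is the actual reason zero flow loss cannot be demanded at $t=1$.
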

\begin{proof}
    The proof can be found in App.~\ref{app:proof_prop2}.
\end{proof}

Proposition~\ref{prop:1} ensures the existence of encoders that do not induce crossing in the target trajectory, while Proposition~\ref{prop:2} suggests that the encoders are optimal when flow loss is minimized, assuming the label encoder is injective. 
Since the label autoencoding task enforces $g_\varphi$ to be injective, combining the two propositions implies that optimizing the objective function in Eq.~\eqref{overall_objective} prevents the encoders from inducing target trajectory crossing and enables the dynamics function to accurately fit the induced trajectories.

\paragraph{Optimization} \label{optimization_technique}
During the optimization of Eq.~\eqref{overall_objective}, we find that two additional regularizations on the encoders are useful in further preventing suboptimal solutions and stabilizing training and inference.

First, although the autoencoding loss in Eq.~\eqref{eq:label_ae_loss} prevents degenerate encoders in principle, we find that flow matching loss in Eq.~\eqref{eq:flow_loss} can still induce an ill-behaved local minima for the data encoder $f_\phi$.
For instance, under linear velocity field (Eq.~\eqref{eq:linear_ansatz}), a collapsed, constant data encoder, \emph{e.g.}, $z_0=0$, makes both intermediate state and target velocity depend only on $z_1$ at $t\in(0,1]$, \emph{e.g.}, $z_t = t z_1$ and $v_t = z_1$. 
In this case, a dynamics function that scales with time, $h_\theta(z,t) = z/t$, can fit the target velocity field for all $t \in (0,1]$ although it does not yield meaningful mapping between $(x, y)$.
We find that explicitly sampling $t=0$ with a certain probability during training effectively resolves the issue.

Second, we empirically find that the label encoder tends to reduce the scale of the output embeddings to optimize the flow matching loss.
Although this is not a fundamental problem in principle, we observe that it often affects the generalization performance by making the model prone to small numerical errors at inference, such as prediction errors in the dynamics function or discretization errors in the ODE solver.
To address this, we encourage the label embeddings to \textit{repel} each other, so that they construct a robust destination point for ODE solving during inference.
Specifically, during training we add random noise $\epsilon \sim \mathcal{N}(0,\sigma^2)$ to label embedding and let the label decoder to reconstruct  original label from it, \emph{i.e.},  we minimize $\mathbb{E} [||d_{\psi}( g_{\varphi}(y) + \epsilon ) - y||_2^2]$ instead of Eq.~\eqref{eq:label_ae_loss}.
We empirically find these techniques helpful for better optimization, as shown in Sec.~\ref{ablation_collapse}.

\section{Related Work}
\paragraph{Enhancing Efficiency of NODEs}

Several previous works have addressed the problem of high numbers of function evaluations (NFEs) in the forward process of Neural Ordinary Differential Equations (NODEs). 
Most approaches involve imposing regularization on the learned trajectory, such as penalizing higher-order derivatives~\cite{kelly2020taynode} or incorporating kinetic regularization~\cite{finlay2020rnode}. 
Similar effects can be achieved through weight decay~\cite{grathwohl2019ffjord}, augmenting dimensions~\cite{dupont2019anode}, or using internal solver heuristics~\cite{pal2021nde}. 
Additionally, sampling the end time of the integration interval~\cite{arnab2020steer} has also been explored as a simple solution. 
These methods encourage the model to learn simpler trajectories, thereby effectively reducing the NFE required to solve the ODE.
However, with the common choice of adaptive-step solvers, there often remain dozens of sequential function evaluations during a single training step, making the training of NODEs slow and computationally intensive.

\paragraph{Simulation-Free Training on Paired Dataset}

Some studies have explored simulation-free training methods for fitting dynamics functions on paired datasets, primarily within the context of diffusion probabilistic models (DPMs)~\cite{ho2020ddpm}. 
DPMs achieve simulation-free training by learning to denoise data at multiple noise levels in parallel~\cite{luo2022understanding}.
During inference, DPMs generate data from standard Gaussian noise through iterative denoising. 
These models have been applied to various vision tasks aimed at learning deterministic mappings, including segmentation~\citep{amit2021segdiff, baranchuk2022labeffsemseg, chen2023panopticseg}, object detection~\cite{chen2023diffusiondet}, and image restoration tasks~\citep{li2022srdiff, rombach2022stable, whang2022deblurring}, among others.
While these applications are impressive, they are tailored specifically to their target tasks and do not represent general methods for paired data with diverse label structures.

One notable work in this area is CARD~\cite{han2022card}, which introduced a new conditional diffusion process for classification and regression tasks, making it applicable to arbitrary regression and classification data. 
Although these works, based on diffusion models trained with denoising objectives, align more closely with neural SDEs rather than ODEs, we include a comparison with CARD in our main experiments to provide a comprehensive evaluation.

\section{Experiments}
\label{sec:experiments}

\subsection{Experimental Setup}
\paragraph{Baselines and Datasets}
We validate the effectiveness of our method on various datasets for both regression and classification tasks. 
For baselines, we compare our method with the standard NODE~\cite{chen2018neuralode} and previous works that utilize regularization to reduce NFEs. 
Specifically, we compare against STEER~\cite{arnab2020steer}, which introduces stochasticity in the integration interval, and RNODE~\cite{finlay2020rnode}, which regularizes the norm of the velocity field. 
Additionally, we include a comparison with CARD~\cite{han2022card}, a classification and regression model based on diffusion. 
Following prior work~\citep{dupont2019anode, arnab2020steer}, we use MNIST~\cite{mnist}, SVHN~\cite{svhn}, and CIFAR10~\cite{cifar10} for image classification experiments. 
For regression tasks, we use UCI regression datasets~\cite{uci}, adhering to the protocol used by CARD.

\paragraph{Evaluation}
We report classification accuracy and root mean square error (RMSE) as the main performance metrics. 
To quantify computational costs, we report average per-sample NFEs along with training throughput measured by the total training iterations divided by training time.
For all NODE baselines, we use the dopri5~\cite{dopri5} adaptive-step solver implemented in torchdiffeq~\cite{torchdiffeq} package for both training and inference. 
Additionally, we report few-step inference results using the Euler solver. 
For CARD, we perform few-step inference by periodically skipping intermediate steps, similar to DDPM in few-step inference~\citep{ho2020ddpm, song2021ddim}, and report the metric of full-step inference in the place of adaptive-step solver in NODEs.

\paragraph{Implementation Details}
We use the same network architecture across baselines, employing an MLP-based architecture for MNIST and UCI and a convolutional architecture for SVHN and CIFAR10. 
For classification tasks, we use one-hot encoded labels for $y$ and assign the predicted label $\hat{y}$ by applying argmax on the channel dimension. 
To handle the significant memory requirements of training NODE-based baselines, we use the adjoint sensitivity method~\cite{chen2018neuralode} for SVHN and CIFAR10 experiments. 
When using the adjoint method, we report the total NFE by summing the number of function evaluations in both the forward and backward passes. 
For our model, we primarily use a simple linear dynamics assumption, which is shown to be effective according to the analysis in Sec.~\ref{sec:dynamics_type}. Further experimental details can be found in App.~\ref{app:experiment_details}.


\begin{figure*}[!t]
\centering
    \begin{minipage}[!t]{1.0\textwidth}
    \centering
    \captionof{table}{
    Experiment results on image classification.
    Training cost and few-/full-step performances are reported in three datasets. 
    For classification accuracy, numbers indicate the number of function evaluations with Euler solver, where $\infty$ denotes the result of dopri5 adaptive-step solver. For CARD, we report the 1000-step decoding results instead of using the adaptive solver, as the model was trained on discrete timesteps. CARD\textsuperscript{\textdagger} is trained with 4 times longer steps.
    }
    \label{tab:classification_result_main}
    \footnotesize
    \begin{tabular}{ccccccccc}
    \toprule
    && \multicolumn{2}{c}{Training Cost} & \multicolumn{5}{c}{Accuracy over NFE (\%)} \\
    \midrule
    \multirow{2}{*}{Dataset} & \multirow{2}{*}{Model} & \multirow{2}{*}{NFE}      & Throughput      & \multirow{2}{*}{1}          & \multirow{2}{*}{2}   & \multirow{2}{*}{10}  &  \multirow{2}{*}{20} & \multirow{2}{*}{$\infty$} \\
    & & & (Batch / sec.) & & & & & \\
    \midrule
    \multirow{5}{*}{MNIST}
    & NODE         & 354 &  0.93  & 14.20          & 7.71           & 8.81           & 10.83          & 98.36               \\
    & STEER        & 90   & 3.22  &         24.79       &     29.39       &    49.57       &   62.68        &    99.23   \\
    & RNODE      & 43  &  3.27   &      \textbf{99.35}    &      \textbf{99.35}       &     \textbf{99.35}      &    \textbf{99.35}       &     \textbf{99.35}    \\
    & CARD & \textbf{1} & 28.95 &      9.72        &       15.88         & 98.92 & 99.12 & 98.83  \\
    \cmidrule(l){2-9} 
    & Ours         & \textbf{1} & \textbf{29.41} & 99.33 & 99.25 & \textbf{99.35} & 99.34 & 99.30  \\
    \midrule
    \multirow{6}{*}{SVHN}
    & NODE        & 75 & 0.42 & 87.72 & 91.89 & 95.18 & 95.16 & 95.09\\
    & STEER     & 110 & 0.27 & 30.55 & 65.00 & 93.22 & 94.44 & 94.55\\
    & RNODE     & 130 & 0.14 & 93.36 & 95.01 & 95.37 & 95.39 & 95.39\\
    & CARD      & \textbf{1} & 7.94 & 9.95 & 16.01 & 76.87 & 79.49 & 65.31 \\
    & CARD\textsuperscript{\textdagger}      & \textbf{1} & 7.94 & 9.95 & 35.91 & 95.36 & 95.31 & 95.23 \\
    \cmidrule(l){2-9} 
    & Ours      & \textbf{1} & \textbf{9.48} & \textbf{96.16} & \textbf{96.14} & \textbf{96.03} & \textbf{96.03} & \textbf{96.12} \\
    \midrule
    \multirow{6}{*}{CIFAR10}
    & NODE      & 91 & 0.34 & 81.18 & 82.72 & 86.25 & 86.33 & 86.30 \\
    & STEER     & 96 & 0.33 & 76.80 & 77.78 & 83.55 & 84.24 & 84.51\\
    & RNODE     & 89 & 0.19 & 79.61 & 81.62 & 85.68 & 85.96 & 86.08 \\
    & CARD      & \textbf{1} & 7.67 & 10.22 & 18.78 & 84.48 & 84.68 & 81.77 \\
    & CARD\textsuperscript{\textdagger} & \textbf{1} & 7.67 & 10.61 & 33.69 & 86.67 & 86.54 & 86.42 \\
    \cmidrule(l){2-9} 
    & Ours & \textbf{1} & \textbf{9.00} & \textbf{88.87} & \textbf{88.85} & \textbf{88.71} & \textbf{88.88} & \textbf{88.89} \\
    \bottomrule
    \end{tabular}%
    \end{minipage}
    \\
    \vspace{0.1cm}
    \begin{minipage}[!t]{1.0\textwidth}
    \centering
    \includegraphics[width=1.0\linewidth]{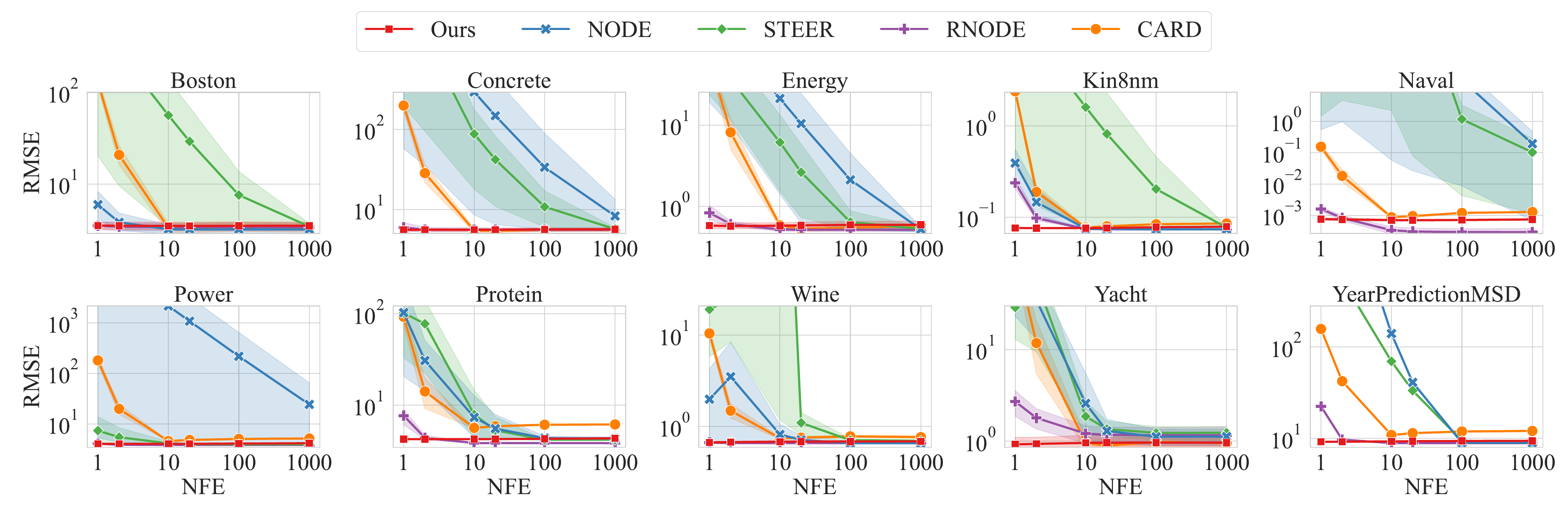}
    \vspace{-0.7cm}
    \captionof{figure}{RMSE over NFEs on UCI regression tasks. To control the NFE, we use Euler solver for the evaluation. By assuming linear dynamics, our model shows better performance in low NFE regime.}
    \vspace{-0.5cm}
    \label{fig:regression_result_main}
    \end{minipage}
\end{figure*}

\subsection{Main Results}
\paragraph{Training Cost and Performance}
In Tab.~\ref{tab:classification_result_main}, we report the training cost and classification accuracy on MNIST, SVHN, and CIFAR10. 
NODE-based baselines suffer from large NFEs—ranging from tens to hundreds—during training, resulting in a low throughput. 
In contrast, simulation-free training method (\emph{i.e.}, ours and CARD) require only a single function evaluation per training step, significantly boosting training speed.

The trade-off for faster training in simulation-free methods is a constraint on the dynamics that can be learned. 
However, we observe that this reduced flexibility does not lead to significant performance degradation. 
Compared to NODE-based baselines, our method shows only minor degradation on MNIST and even improvements on SVHN and CIFAR10 in terms of final classification accuracy. 
On SVHN and CIFAR10, NODE-based baselines are trained with the adjoint sensitivity method~\cite{chen2018neuralode} to meet memory requirements, which is known to suffer from inaccurate gradient estimation~\citep{zhuang2020aca, zhuang2021mali}. 
Thus we hypothesize that the performance gains on SVHN and CIFAR10 may be due to the accurate gradient calculation achieved through direct backpropagation in our model, which is a positive byproduct of utilizing simulation-free training.

When compared to the diffusion-based baseline, our model tends to show better performance and faster convergence.
Specifically, our model outperforms CARD on all three datasets, even compared to CARD variants trained for longer iterations. 
While diffusion models also benefit from simulation-free training, they are based on stochastic differential equations that induce stochastic and non-linear trajectories.
We conjecture that this characteristic is not ideal for few-step inference and also contributes to slower convergence.

\paragraph{Few-Step Inference}
Ideally, our model trained with linear dynamics will yield a perfectly straight solution trajectory, which can be accurately estimated even with a one-step Euler solver. 
Consequently, with our linear dynamics assumption, we can significantly enhance inference speed by utilizing few-step inference while maintaining competitive performance compared to many-step solving. 
To demonstrate this, we report few-step inference results with the Euler solver in Tab.~\ref{tab:classification_result_main}. 
Our model, by avoiding crossing points in the learnable embedding space, produces a linear trajectory and thus exhibits superior few-step performance. 
This observation aligns with findings in flow matching models~\citep{lipman2022cfm, liu2022rectifiedflow}, which highlight the advantage of linear dynamics for generating high quality samples with low inference cost.

\paragraph{Regression}
We further analyze the effectiveness of our method in regression tasks, as shown in Fig.~\ref{fig:regression_result_main}. 
See App.~\ref{app:regression_full} for full results.
Despite differences in label structure, we observe similar trends for both classification and regression tasks: our method significantly reduces computational burden during training and demonstrates superior performance in few-step inference with linear dynamics. 
Similar to the original NODE, our model can be effectively applied to a wide range of common supervised learning settings, regardless of whether the labels are categorical or continuous.

\subsection{Analysis and Discussion} \label{sec:analysis}


\begin{wrapfigure}{r}{0.48\textwidth}
\footnotesize
    \centering
    \captionof{table}{
    The effectiveness of learning encoders with flow loss.
    Training accuracy and the proportion of disagreement in prediction between a one-step Euler solver and an adaptive-step solver are shown.
    Simply augmenting dimensions (ANODE+FM) does not effectively prevent trajectory crossing.
    Furthermore, learning encoders without flow loss (Autoencoder+FM) also fails to preserve the original coupling due to crossing trajectories.
    }
    \label{tab:twostage}
    \begin{tabular}{c|cc}
    \toprule
    Training & Disagreement & Accuracy \\
    \midrule
        ANODE + FM          & 47.30\%           & 30.23\%           \\
        Autoencoder + FM    & 11.56\%           & 55.73\%           \\
        Ours                & \textbf{0.02\%}   & \textbf{99.80\%}  \\
    \bottomrule
    \end{tabular}
\end{wrapfigure}

\paragraph{Learning Encoders with Flow Loss}
To support our key claim that learning encoders with flow loss allows our model to avoid crossing trajectories, we compare our model with  two variants that do not utilize flow loss for learning encoders.
Specifically, we consider: (1) \textit{ANODE+FM} which augments the data and label to same dimensionality by zero-padding and learns dynamics function with flow matching; and (2) \textit{Autoencoder+FM} which employs a two-stage approach by first learning the embedding space using an independent autoencoding objective for both data and labels, and then learning the dynamics function on the fixed embedding space\footnote{The samples reconstructed by the pretrained data autoencoder are presented in App.~\ref{app:twostage}.}. 
To quantify the crossing points in trajectories, we train all models with the linear dynamics assumption and measure the proportion of samples where the predicted labels from a one-step Euler solver and an adaptive-step solver disagree\footnote{
We also analyze the relationship between time $t$ and the occurrence of trajectory crossing in App.~\ref{app:where_crossing}.}. 
We report the disagreement ratio and training accuracy in Tab.~\ref{tab:twostage}.

As discussed in Sec.~\ref{sec:conflict}, our results indicate that merely augmenting the dimension (ANODE+FM) does not resolve the issue of crossing trajectories induced by the predefined dynamics, resulting in a poor performance even on training data.
Although employing more sophisticated encoders (Autoencoder+FM) partially reduces disagreements, it still fails to fit the training data properly.
Such crossing trajectories can be eliminated by learning encoders with the flow matching loss, allowing our model to fit successfully to training data with high accuracy.

\paragraph{Role of the Learned Dynamics Function}
A potential concern is that learning a nonlinear transformation to embed data might lead the data encoder to perfectly predict the label.
This issue, noted in previous work~\cite{dissecting}, warns that learning a complex input transformation could result in a collapse where the dynamics function becomes a simple identity map.
To investigate whether this issue occurs in our case, we conduct a 1-NN classification using the learned data encoder on CIFAR10 image classification.
The 1-NN classification accuracy with the learned data embedding $z_0$ is 65.66\%, which is significantly lower than the accuracy of 88.47\% when we utilize the learned dynamics function to obtain the predicted label embedding $\hat{z}_1$.
This result indicates that the learned data embedding is not yet linearly separable enough, and the learned flow can further enhance accuracy.
Based on this observation, we conclude that the collapse scenario, where the velocity field becomes an identity map, does not occur in our model.
Instead, we find that the learned dynamics function clearly plays a role in processing the data, thereby avoiding the aforementioned pitfall.

\begin{wrapfigure}{r}{0.45\textwidth}
    \centering
    \vspace{-0.2cm}
    \includegraphics[width=0.44\textwidth]{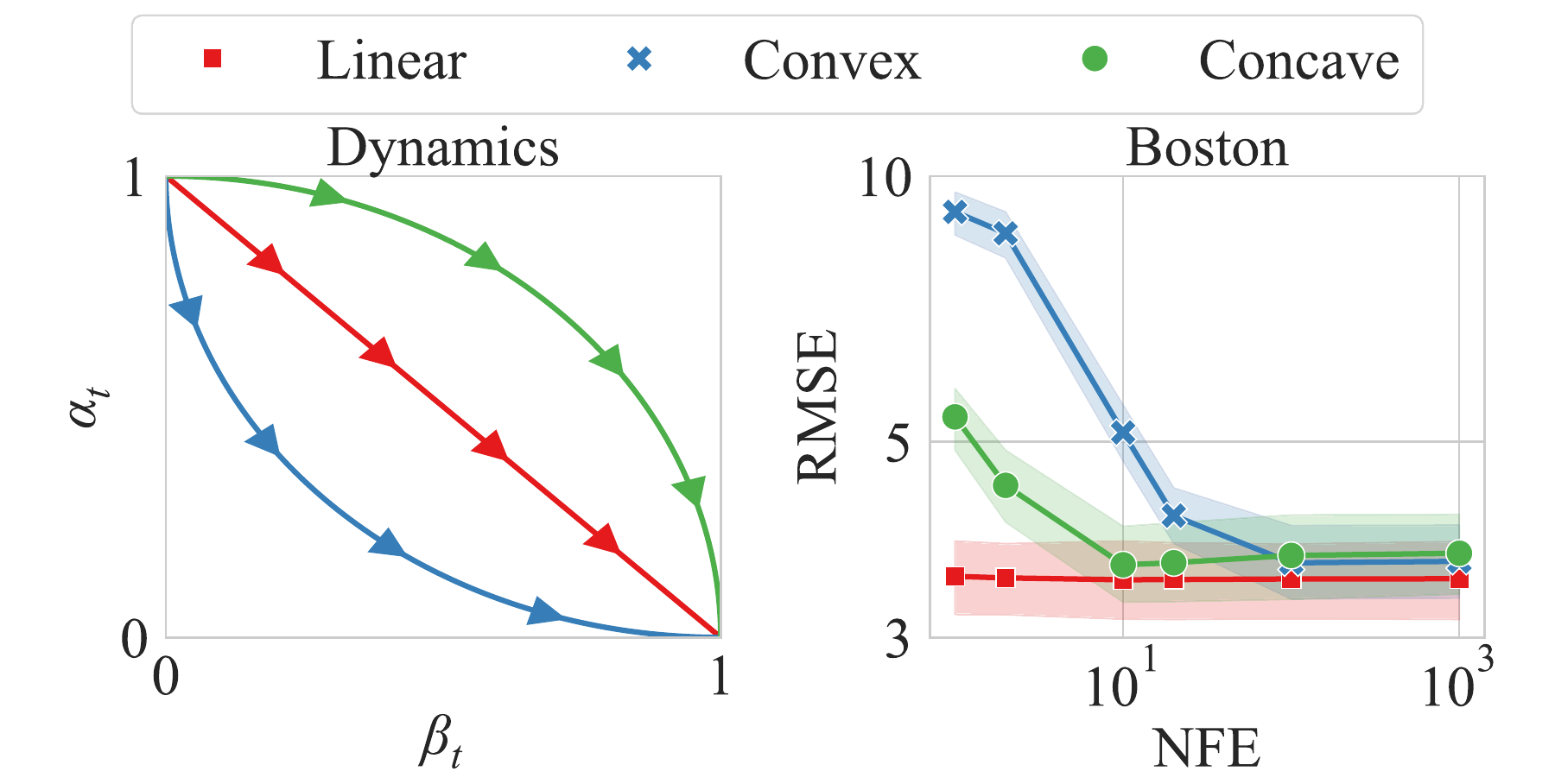}
    \caption{
    Analysis on predefined dynamics. 
    (Left) Change of coefficients in interpolant with respective to time. 
    (Right) Prediction RMSE over NFE on UCI Boston dataset.
    }
    \label{fig:dynamics_type_acc}
    \vspace{-1.2cm}
\end{wrapfigure}

\paragraph{Nonlinear Predefined Dynamics} \label{sec:dynamics_type}
Our method embraces not only the linear dynamics but also dynamics having fixed form of $z_t=\alpha_t z_0 + \beta_t z_1$ in general.
Here, we investigate the effect of utilizing different dynamics assumptions, including nonlinear ones.
To be specific, we choose three different dynamics that is easy to implement:
\begin{itemize}[leftmargin=*]
    \item Concave: $\alpha_t=\cos(\frac{\pi}{2}t), \beta_t=\sin(\frac{\pi}{2}t)$ 
    \item Linear: $\alpha_t=(1-t), \beta_t=t$
    \item Convex: $\alpha_t=1-\sin(\frac{\pi}{2}t), \beta_t=1-\cos(\frac{\pi}{2}t)$.
\end{itemize}
We visualize the difference of dynamics and their effect on final performance in Fig.~\ref{fig:dynamics_type_acc}. 

As it shows, the performance of the nonlinear variants (\emph{i.e.}, convex and concave) clearly improves with an increased number of function evaluations. 
For these variants, the many-step inference performance increases as we invest more NFEs and then later saturates, indicating that the model's approximation of an infinite-depth model becomes sufficiently accurate as discretization error diminishes.
In contrast, our model trained on the linear dynamics shows consistently good performance, even with a few function evaluations.
This behavior is somewhat expected, as the ideal linear trajectory can be already accurately inferred using a one-step Euler solver.
Surprisingly, we also empirically observe that the choice of linear dynamics leads to better performance, compared to more complex choice of dynamics assumption.
Thus, similar to the claims of Liu et al.(2022)~\cite{liu2022rectifiedflow}, we believe that linear dynamics should be considered as a default choice unless specific constraints on hidden states are required.

\paragraph{Ablation on Optimization Techniques} \label{ablation_collapse}
\begin{figure}
    \centering
    \includegraphics[width=1.0\linewidth]{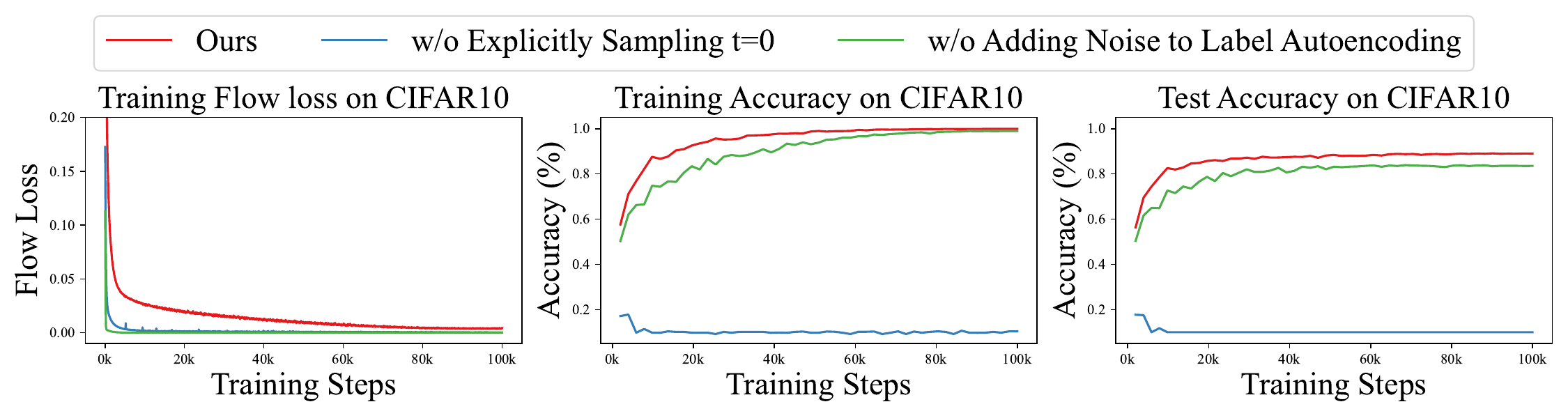}
    \caption{Ablation study of optimization techniques on CIFAR10. Explicitly sampling $t=0$ in training prevents suboptimal solutions while adding noise to label autoencoding improves generalization.}
    \label{fig:ablation_collapse}
\end{figure}

We conduct an ablation study on CIFAR10 dataset to investigate the effects of optimization techniques introduced in Sec.~\ref{optimization_technique}, and report the results in Fig.~\ref{fig:ablation_collapse}. 
Similar ablation studies were also conducted on other datasets, as detailed in App.~\ref{app:more_ablation}.
The variant without explicitly sampling $t=0$ fails to fit on training data, despite the convergence of the flow loss.
Furthermore, the variant without adding noise to the label autoencoding objective succeeds in fitting the training set, but its test accuracy significantly degrades compared to the version with noise in label autoencoding. 
Thus, as discussed in Sec.~\ref{optimization_technique}, we conclude that explicitly sampling $t=0$ and introducing noise in label autoencoding effectively regularize both encoders and produces label embedding that is robust to test-time errors.

\section{Limitations and Future Work} \label{sec:limitation}
In this work, we primarily study the problem of adopting flow matching by imposing a fixed and simple dynamics assumption, observing that a fixed, closed-form equation for the intermediate state is already sufficient to bring the advantages of flow matching.
Although employing simple dynamics (\emph{e.g.} linear) may seem overly restrictive, this approach can be justified in the context of Koopman operator theory, which aims to find embeddings that globally linearize the dynamics. Since our method of flow matching with linear dynamics shares a high-level concept with Koopman operator theory, exploring the relationship between the two could be a promising direction for future research. We leave further discussions in App.~\ref{app:koopoman}.

On the other hand, while we studied simple and predefined dynamics, the form of dynamics assumption could be further generalized to be a learnable component.
To be specific, introducing a learnable target dynamics that determines per-sample dynamics would be a promising future direction to study.
This would be advantageous for handling inputs with varying complexity, by efficiently and adaptively allocating more computation on hard samples.

Furthermore, extending our method to a broader range of paired data applications might be a useful future direction. 
The reduced computational burden achieved through simulation-free training could offer several benefits of continuous-depth models to diverse applications. 
For example, applying our method to model compression or knowledge distillation could be particularly promising, leveraging the parameter efficiency of continuous-depth models.

\section{Conclusion}
In this work, we adopted a flow matching objective to achieve simulation-free training of continuous-depth models for learning deterministic mappings between paired data. 
We proposed learning an embedding space where flow matching occurs, which we identified as a crucial component for ensuring the validity of the target velocity field.
Our proposed method significantly reduces the computational burden of training NODEs while maintaining competitive performance. 
Additionally, we found that our method, leveraging simple linear dynamics, demonstrates impressive performance on low-NFE regime.

\begin{ack}
This work was in part supported by the National Research Foundation of Korea (RS-2024-00351212 and RS-2024-00436165), and Institute of Information and communications Technology Planning and Evaluation (IITP) grant (RS-2022-II220926, RS-2024-00509279, RS-2021-II212068, and RS-2019-II190075) funded by the Korean government (MSIT).
We thank Youngmin Ryou (KAIST) for his valuable input in formulating the theoretical analysis presented in this work.

\end{ack}

\newpage
\bibliography{custom}
\bibliographystyle{acl_natbib}
\newpage
\appendix

\section*{Appendix}

\section{Proofs}
\label{app:theoretical_results}
\setcounter{prop}{0}
\setcounter{defin}{0}

In this section, we provide full proofs for the propositions in Sec.~\ref{sec:method}.
We first start with formal definition of target trajectory crossing.
Assume that we have a data encoder $f_\phi$ and label encoder $g_\varphi$ that transform data $x \in \mathbb{R}^{d_x}$ and labels $y\in \mathbb{R}^{d_y}$ to latent $z_0, z_1 \in \mathbb{R}^{d}$, where $z_0=f_\phi(x)$ and $z_1=g_\varphi(y)$, respectively.
We also choose a predefined dynamics $F({z}_0,{z}_1,t) = \alpha_t {z}_0+ \beta_t{z}_1 ={z}_t$.
Under the assumption of $d>d_x, d_y$\footnote{ 
As $d_x$ and $d_y$ are dimensions in observation space, we also conjecture that the latent dimension $d$ can be made smaller if the data lives on a low-dimensional manifold.
} and both $\alpha_t$, $\beta_t$ being smooth and nonzero except for $t=0$ and $t=1$, we define the target trajectory crossing as follows:
\begin{defin}[Target Trajectory Crossing]
    The encoders $(f_\phi,g_\varphi)$ are said to induce a target trajectory crossing if there exists a tuple $(t,{x},{y},{x}^\prime,{y}^\prime)$ such that  $\alpha_t f_\phi({x})+ \beta_t g_\varphi({y})=\alpha_t f_\phi({x}^\prime)+ \beta_t g_\varphi({y}^\prime)$ for ${x}\neq{x}^\prime$ and ${y}\neq{y}^\prime$.
\end{defin}
We now provide the proofs for each proposition in the following subsections.

\subsection{Proof of Proposition~\ref{prop:1} (Section~\ref{sec:method})}
\label{app:proof_prop1}
\begin{prop}
    There exist $(f_\phi,g_\varphi)$ that induces non-crossing target trajectory for any data pair in $\mathcal{D}$ while minimizing $\mathcal{L}_{label\_ae}$.
\end{prop}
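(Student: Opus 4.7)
The plan is to establish existence by an explicit, dimension-aware construction. I would set the label decoder $d_\psi$ to be a linear projection onto a chosen $d_y$-dimensional coordinate block of $\mathbb{R}^d$ and let $g_\varphi$ embed $y$ into that block by zero-padding the remaining coordinates, so that $d_\psi \circ g_\varphi = \mathrm{id}$ and $\mathcal{L}_{label\_ae}$ attains its minimum value of zero. Then I would embed $f_\phi(x)$ into an \emph{orthogonal} coordinate block of $\mathbb{R}^d$ by a similar zero-padding of $x$, which is possible as soon as $d \geq d_x + d_y$.

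To verify the non-crossing property, I would pick two pairs $(x^{(i)}, y^{(i)}) \neq (x^{(j)}, y^{(j)})$ in $\mathcal{D}$ with $x^{(i)} \neq x^{(j)}$ and $y^{(i)} \neq y^{(j)}$ and examine the difference $z_t^{(i)} - z_t^{(j)} = \alpha_t(f_\phi(x^{(i)}) - f_\phi(x^{(j)})) + \beta_t(g_\varphi(y^{(i)}) - g_\varphi(y^{(j)}))$ block-by-block. By construction, its restriction to the $f_\phi$-block is $\alpha_t(x^{(i)} - x^{(j)})$ and its restriction to the $g_\varphi$-block is $\beta_t(y^{(i)} - y^{(j)})$. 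Since $\alpha_t$ and $\beta_t$ are both nonzero on $(0,1)$ and exactly one of them vanishes at each endpoint of $[0,1]$, at least one block is nonzero for every $t \in [0,1]$, ruling out coincidence of the two trajectories and hence crossing.

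The main obstacle is that this clean construction tacitly assumes $d \geq d_x + d_y$, which is stricter than the footnote's hypothesis $d > \max(d_x, d_y)$. In the weaker regime the image subspaces must overlap, and the proof would instead need a generic-position argument: fix $g_\varphi$ as any injective embedding preserving $\mathcal{L}_{label\_ae}=0$, and note that the crossing equation $\alpha_t(f_\phi(x^{(i)}) - f_\phi(x^{(j)})) = \beta_t(g_\varphi(y^{(j)}) - g_\varphi(y^{(i)}))$, taken over the finitely many qualifying pairs and $t \in [0,1]$, constrains each $f_\phi(x^{(i)}) - f_\phi(x^{(j)})$ to a single ray in $\mathbb{R}^d$. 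This cuts out a measure-zero subvariety of the $d \cdot N$-dimensional configuration space for $\{f_\phi(x^{(i)})\}$, so a Lebesgue-generic choice dodges all crossings simultaneously. Formalizing this measure-zero claim---a finite union of bilinear constraints in the embedded endpoints---is the most delicate bookkeeping step, but it is standard once the construction is in place.
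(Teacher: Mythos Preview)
Your first construction is correct and is essentially the paper's own argument: split $\mathbb{R}^d$ into a label subspace and a complementary subspace, then read off the crossing equation block by block. The paper does precisely this, taking $g_\varphi$ with image in $\text{span}\{e_1,\dots,e_k\}$ (with $d>k\geq d_y$) and requiring only that $\text{proj}_{\text{span}\{e_{k+1},\dots,e_d\}}\circ f_\phi$ be injective on the data. Projecting the crossing equation onto the complementary block kills the $g_\varphi$ term and forces $x=x'$ whenever $\alpha_t\neq 0$, exactly as in your verification.

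The difference is that you \emph{zero-pad} $x$ into the complementary block, which is why you are led to the constraint $d\geq d_x+d_y$. The paper avoids this by asking only for injectivity of the \emph{projection} of $f_\phi$ onto that block. Since the proposition is stated for the finite dataset $\mathcal{D}$, a single extra coordinate beyond the label block already suffices to send the finitely many $x^{(i)}$ to distinct projected values; hence $d>k\geq d_y$ (which follows from $d>\max(d_x,d_y)$) is enough. So the obstacle you identify is an artifact of insisting on the literal identity embedding of $x$, not of the block argument itself.

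Your second, generic-position route is a valid and genuinely different alternative: fixing an injective $g_\varphi$ and noting that the crossing constraints cut out, for each qualifying pair, a single ray for $f_\phi(x^{(i)})-f_\phi(x^{(j)})$, hence a measure-zero set in the configuration space of $\{f_\phi(x^{(i)})\}$. This is correct, but it is more machinery than needed given that the paper's deterministic construction already handles the stated hypothesis once you relax zero-padding to injective projection.
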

\begin{proof}
    Let the latent space constructed by a set of basis $\mathbb{I}=\{{e}_1,{e}_{2},...,{e}_d\}$. Since $d>d_y$, we can find a label encoder $g_\varphi$ such that utilizes $k$ basis $\mathbb{J}=\{{e}_1,{e}_{2},...,{e}_k\}$ ($d>k\geq d_y$ ) and minimizes the autoencoding loss (\emph{i.e.}, $g_\varphi({y})=g_\varphi({y}')$ if and only if ${y}={y}'$). Also, we can find a data encoder $f_\phi$  such that $\text{proj}_{\text{span}(\mathbb{K})}f_\phi({x}) = \text{proj}_{\text{span}(\mathbb{K})}f_\phi({x}')$ if and only if ${x}={x}'$, where $\mathbb{K}=\{{e}_{k+1}, ..., {e}_{d}\}$.

    Then, suppose that there exists a tuple $(t,{x},{y},{x}^\prime,{y}^\prime)$ such that $\alpha_t f_\phi({x})+ \beta_t g_\varphi({y})=\alpha_t f_\phi({x}^\prime)+ \beta_t g_\varphi({y}^\prime)$, \emph{i.e.}, $\alpha_t (f_\phi({x})-f_\phi({x}'))+ \beta_t( g_\varphi({y})- g_\varphi({y}'))= {0}$.
    
    Since $g_\varphi({y})- g_\varphi({y}') ={0}$ if and only if ${y} = {y}'$ and $\text{proj}_{\text{span}({\mathbb{K})}}(f_\phi({x})-f_\phi({x}'))={0}$ if and only if ${x}={x}'$ by construction, such a tuple does not exist. Therefore, there exists $f_\phi, g_\varphi$ such that does not induce target trajectory crossing, while minimizing the autoencoding loss.
    \end{proof}

\subsection{Proof of Proposition~\ref{prop:2} (Section~\ref{sec:method})}
\label{app:proof_prop2}
\begin{prop}
    If $g_\varphi$ is injective, the following equivalence holds: $(f_\phi,g_\varphi, h_\theta)$ minimizes $\mathcal{L}_{flow}$ to zero for all $t\in[0, 1)$ if and only if $(f_\phi,g_\varphi)$ induces non-crossing target trajectory and $h_\theta$ perfectly fits the induced target velocity, for any data pair in $\mathcal{D}$.
\end{prop}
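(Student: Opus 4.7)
The plan is to prove the equivalence by treating the two directions separately. The $(\Leftarrow)$ direction should be essentially immediate: non-crossing means that for every $t\in[0,1)$ the map $(x,y)\mapsto z_t=\alpha_t f_\phi(x)+\beta_t g_\varphi(y)$ on $\mathcal{D}$ is injective (modulo pairs with coinciding $y$, which are handled by injectivity of $g_\varphi$), so the target velocity $v_t=\dot\alpha_t f_\phi(x)+\dot\beta_t g_\varphi(y)$ is a well-defined function of $(z_t,t)$; if $h_\theta$ matches $v_t$ at every pair, then the squared-error integrand of $\mathcal{L}_{flow}$ vanishes pointwise in $t$. I would dispose of this in one short paragraph.

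The substantive work is the $(\Rightarrow)$ direction. Assuming $\mathcal{L}_{flow}=0$ for every $t\in[0,1)$, non-negativity of the integrand and finiteness of $\mathcal{D}$ force $h_\theta(z_t,t)=v_t$ for every data pair and every such $t$, which already gives the ``perfect fit'' half of the conclusion. For non-crossing, I would argue by contradiction: suppose a tuple $(t^*,x,y,x',y')$ with $x\neq x'$, $y\neq y'$ and $z_{t^*}=z'_{t^*}$ exists. Since $h_\theta(z_{t^*},t^*)$ is single-valued but must equal both $v_{t^*}$ and $v'_{t^*}$, the two target velocities coincide. Setting $a:=f_\phi(x)-f_\phi(x')$ and $b:=g_\varphi(y)-g_\varphi(y')$, the crossing condition and the velocity-equality give the $2\times 2$ scalar-linear system
\[
\alpha_{t^*}\,a + \beta_{t^*}\,b = 0,\qquad \dot\alpha_{t^*}\,a + \dot\beta_{t^*}\,b = 0.
\]
Injectivity of $g_\varphi$ with $y\neq y'$ yields $b\neq 0$. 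For $t^*\in(0,1)$, where $\alpha_{t^*}\neq 0$, the first equation gives $a=-(\beta_{t^*}/\alpha_{t^*})b$, and substitution into the second collapses to the Wronskian condition $\alpha_{t^*}\dot\beta_{t^*}-\dot\alpha_{t^*}\beta_{t^*}=0$. The boundary case $t^*=0$ (so $\beta_0=0$) is handled directly: the first equation forces $a=0$, the second then forces $\dot\beta_0\,b=0$, hence $\dot\beta_0=0$, which is again the Wronskian vanishing at $0$. Either way a contradiction is produced.

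The main obstacle is that the final contradiction hinges on a non-degeneracy property of the predefined interpolant, namely $W(t):=\alpha_t\dot\beta_t-\dot\alpha_t\beta_t\neq 0$ on $[0,1)$, which is not spelled out in the preliminaries as written. I would either add this as an explicit hypothesis on the dynamics or verify it directly for each interpolant considered in the paper (for linear dynamics $W\equiv 1$, and for the concave/convex choices of Section~\ref{sec:dynamics_type} the check is a one-line trigonometric identity). A secondary subtlety is the step from the vanishing of the empirical expectation $\mathcal{L}_{flow}$ to the pointwise equality $h_\theta(z_t,t)=v_t$ on every pair; this is immediate because $\mathcal{D}$ is a finite dataset (so the data expectation is a nonnegative sum of per-pair squared errors) and only a minor remark would be needed.
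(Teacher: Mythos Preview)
Your proof follows the same skeleton as the paper's: $(\Leftarrow)$ is immediate, and $(\Rightarrow)$ is by contradiction on the single-valuedness of $h_\theta$ at a crossing point. The difference is that the paper simply \emph{asserts} that the two target velocities $\frac{d}{dt}F(z_0,z_1,t)$ and $\frac{d}{dt}F(z_0',z_1',t)$ differ at the crossing and declares a contradiction, whereas your linear-system/Wronskian computation actually supplies this step and correctly isolates the extra non-degeneracy hypothesis $\alpha_t\dot\beta_t-\dot\alpha_t\beta_t\neq 0$ on $[0,1)$ that is needed but not implied by the paper's stated assumptions (smoothness, and $\alpha_t,\beta_t$ nonzero on $(0,1)$). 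In that sense your argument is the more complete one.

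One small correction to your closing remark: the Wronskian check goes through for the linear and concave interpolants, but for the convex choice $\alpha_t=1-\sin(\tfrac{\pi}{2}t)$, $\beta_t=1-\cos(\tfrac{\pi}{2}t)$ one computes $W(t)=\tfrac{\pi}{2}\bigl(\sin\tfrac{\pi t}{2}+\cos\tfrac{\pi t}{2}-1\bigr)$, which vanishes at $t=0$. In that case one can have $f_\phi(x)=f_\phi(x')$ with $y\neq y'$, yielding a crossing at $t=0$ with coinciding velocities and hence zero flow loss on all of $[0,1)$; so for that interpolant the proposition as stated genuinely requires your added hypothesis (or a restriction to $t\in(0,1)$) to hold.
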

\begin{proof}
    ($\Longleftarrow$) If $(f_\phi,g_\varphi)$ induces non-crossing target trajectory for any data pair in $\mathcal{D}$, there is a well-defined target velocity $\frac{d}{dt}{z}_t$ at every ${z}_t$ which is continuous on $t$. If $h_\theta$ perfectly fits this target velocity for all $({z}_t, t)$, the flow loss is zero.

    ($\Longrightarrow$) We prove by contradiction. Suppose the flow loss is zero but there is a crossing trajectory, \emph{i.e.}, there exists a tuple $(t,{x},{y},{x}^\prime,{y}^\prime)$ that ${z}_t={z}'_t$ for ${x}\neq{x}^\prime$ and ${y}\neq{y}^\prime$. Since the loss is zero for all $t\in[0, 1)$, the dynamics function $h_\theta$ must output $\frac{d}{dt}F({z}_0,{z}_1,t)$ at ${z}_t$, and  $\frac{d}{dt}F({z}'_0,{z}'_1,t)$ at ${z}'_t$. This is a contradiction since at the point of crossing we have ${z}_t={z}'_t$ but $\frac{d}{dt}F({z}_0,{z}_1,t)\neq\frac{d}{dt}F({z}'_0,{z}'_1,t)$.
\end{proof}

\section{Experiment Details}
\label{app:experiment_details}
This section describes the implementation detail in our experiments (Sec.~\ref{sec:experiments}).

\subsection{Experiment Details for Classification Tasks}
\paragraph{Network Architecture (MNIST)}
We employ an MLP-based architecture for the MNIST dataset.
The data encoder maps each image to a 784-dimensional embedding, using a three-layer MLP with a hidden dimension of 784 and BatchNorm~\cite{IoffeS15batchnorm}.
The dynamics function consists of a three-layer MLP with 2048 hidden dimension.
Following NODE~\cite{chen2018neuralode}, we concatenate the time variable to the input of each layer.
Consistent with common practices for NODEs~\cite{kidger2022onnodes}, normalization layers are not included in the dynamics function.
For the label autoencoder, class labels are converted into one-hot vectors and encoded with a single linear layer, while the label decoder utilizes a two-layer MLP with BatchNorm.

\paragraph{Network Architecture (SVHN, CIFAR10)} We adopt a CNN-based architecture for both the SVHN and CIFAR10 datasets.
The data encoder utilizes 7 convolutional layers with a hidden dimension of 64.
By default, we employ 3x3 convolution kernels, while the final two layers utilize 4x4 kernels with a stride of 2 for downsampling, resulting in the data being encoded into states with dimensions of 7x7x64.
The dynamics function consists of 6 convolutional layers with 3x3 kernels and a hidden dimension of 256.
At each layer, the time variable is concatenated to the input.
We utilize a single linear layer for label encoding, reshaping the output to match the size of embedding. In the label decoding process, we average the feature map over the spatial dimension and apply a single linear layer.

\begin{wrapfigure}{r}{0.45\textwidth}
    \centering
    \includegraphics[width=0.45\textwidth]{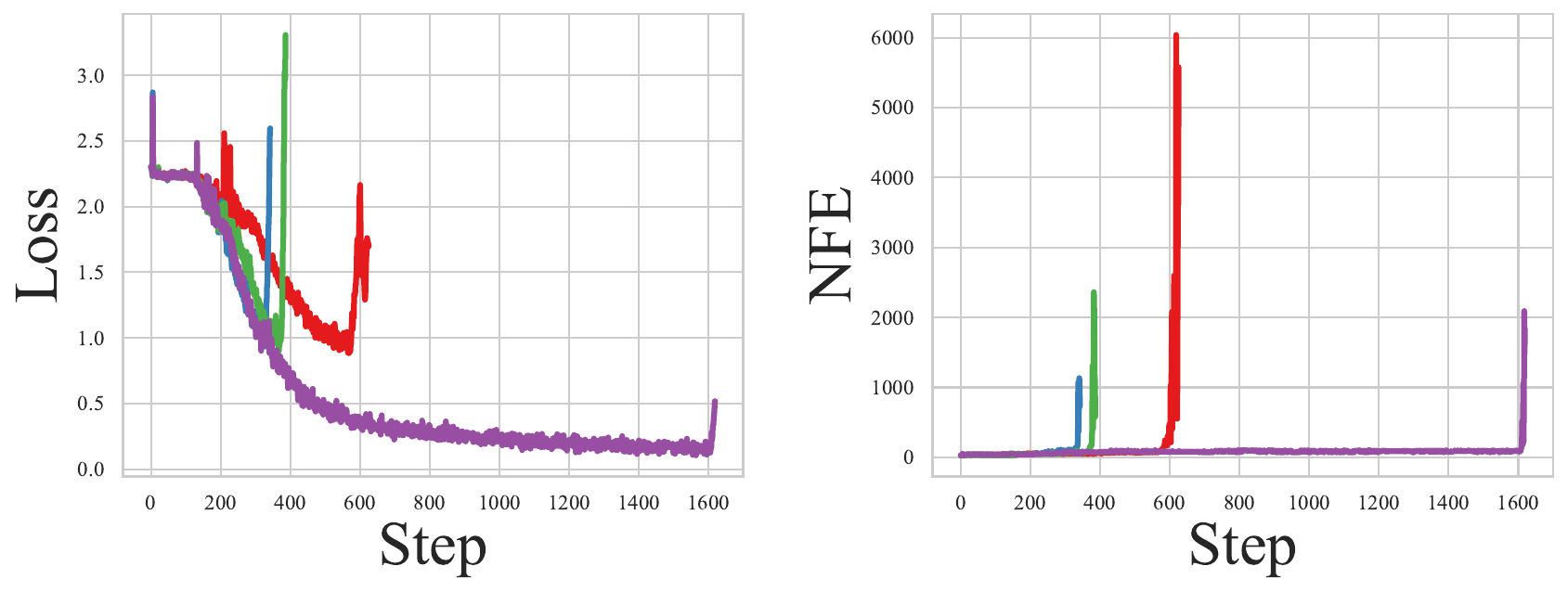}
    \caption{Failure cases of NODEs.}
    \label{fig:node_failure}
\end{wrapfigure}

\paragraph{Training} We train all models for 100,000 iterations using the Adam optimizer~\cite{KingmaB14adam} with a cosine learning rate scheduler.
For all classification experiments, we utilize a batch size of 1024.
Additionally, we set a maximum training time of 48 hours for the feasibility of experiments.
By default, we set the learning rate to 1e-3 for MNIST and 3e-4 for CIFAR10 and SVHN.
For our method, we set the ratio of explicitly sampling $t=0$ to 10\% for all datasets. Regarding the noise introduced to the label autoencoder, we set the standard deviation $\sigma$ to 3 for MNIST, 7 for SVHN, and 10 for CIFAR10.
In cases where training of NODE baselines fails, we adjust the learning rate accordingly.
The failure cases of NODE baselines are illustrated in Fig.~\ref{fig:node_failure}.
\subsection{Experiment Details for Regression Tasks}
\paragraph{Network Architecture} We utilize a 2-layer MLP with a hidden dimension of 64 for the data encoder, and a 7-layer MLP with the same hidden dimension for the dynamics function. In the dynamics function, we concatenate the time variable to the input of each layer. On the label side, we employ a single linear layer for encoding and another single linear layer for decoding.

\paragraph{Training} We trained all models for 100,000 iterations using the Adam optimizer with a constant learning rate of 3e-3. 
Additionally, we split the training set into a train-validation split with a ratio of 6:4, and utilized the validation metric for early-stopping.
Early stopping was implemented by measuring the validation metric every 1,000 iterations and setting the patience level to 10.
For our model, we sample the noise added to the label autoencoding from $\mathcal{N}(0, 3^2)$ and the proportion of explicitly sampling $t=0$ as 10\%.

\subsection{Experiment Details about Baselines}
\paragraph{NODE~\cite{chen2018neuralode}} Following the tolerance values used in NODE~\cite{chen2018neuralode} and ANODE~\cite{dupont2019anode}, we used dopri5~\cite{dopri5} solver with the absolute and relative tolerance of 1e-3 for both training and inference.

\paragraph{STEER~\cite{arnab2020steer}} 
STEER introduces a new hyperparameter $b$ that controls the integration interval. Instead of integrating the dynamics function from 0 to 1, STEER integrates from 0 to $t \sim \text{Uniform}(1-b, 1+b)$ during training. We follow the default configuration of using $b=0.99$ for MNIST classification. For other tasks, we set $b$ to 0.1 since higher values resulted in training failures.

\paragraph{RNODE~\cite{finlay2020rnode}} RNODE introduces two hyperparameters used for the coefficients of regularization terms.
Specifically, it regularizes the Jacobian norm and the kinetic energy of the dynamics function to encourage the model to learn straight and constant-speed dynamics.
The coefficient of 0.01 was generally used throughout the experiments in the original paper.
Therefore, we set both coefficients for the Jacobian norm and kinetic energy as 0.01 for our experiments.

\paragraph{CARD~\cite{han2022card}} CARD utilizes discrete timesteps and a hyperparameter $\beta_t$ to schedule the noise level for diffusion modeling. Following the paper's approach, we use 1000 discrete timesteps and set a linear noise schedule from $\beta_1 = $1e-4 to $\beta_{1000} = 0.02$.

\subsection{Computation Resources}
We conducted experiments on our internal cluster with two types of machine.
We list their specifications below.
\begin{enumerate}
    \item Intel Xeon Gold 6330 CPU and NVIDIA RTX A6000 GPU (with 48GB VRAM)
    \item Intel Xeon Gold 6230 CPU and NVIDIA RTX 3090 GPU (with 24GB VRAM)
\end{enumerate}
We utilized the first machine for image classification experiments, and latter one for regression experiments.
We expect training our model will take about 90 min., 270 min, 230 min. for classification experiments on MNIST, SVHN and CIFAR10, respectively.
For regression tasks, we estimate training cost, summing up for all splits and datasets, would cost about 288 GPU hours.

\section{Relation to Koopman Autoencoder}
\label{app:koopoman}
Our model with linear dynamics shares the high-level motivation with Lusch et al. (2018)~\cite{lusch2018deep}, which aims to find an embedding space that yields linear dynamics between source and target. 
Regardless of the theoretical background, both flow matching and Koopman operator theory are promising approaches that seek to interpret a nonlinear system within a well-studied linear framework.

At the same time, we identify several differences between our work and the line of research based on Koopman operator theory. 
While those works mainly focus on a systematic way to obtain a linearized representation of the underlying nonlinear dynamics (with eigenfunctions), our work aims to find a way to learn it in a simulation-free manner, avoiding the heavy computation of forward simulation (\emph{e.g.}, which appears in $\mathcal{L}_{lin}$ of Lusch et al. (2018)~\cite{lusch2018deep}) from an initial state to an end state. 
Additionally, compared to the discrete depth neural networks that have a single linear layer processor, our proposed method is generally applicable to any nonlinear dynamics that connects two endpoints $z_0$ and $z_1$, exemplified as \textit{convex} or \textit{concave} as discussed in Sec.~\ref{sec:dynamics_type}. 
This implies that in our case, it is possible to have a latent trajectory as a curve in non-Euclidean geometry whenever the interpolated state $z_t$ is tractable.

\section{Additional Results}
In this section, we extend the discussions in Sec.~\ref{sec:experiments} with additional results.

\subsection{Additional Result of Learning Encoders with Flow Loss}
\label{app:twostage}

\begin{wrapfigure}{r}{0.5\textwidth}
    \vspace{-0.5cm}
    \centering
    \includegraphics[width=0.45\textwidth]{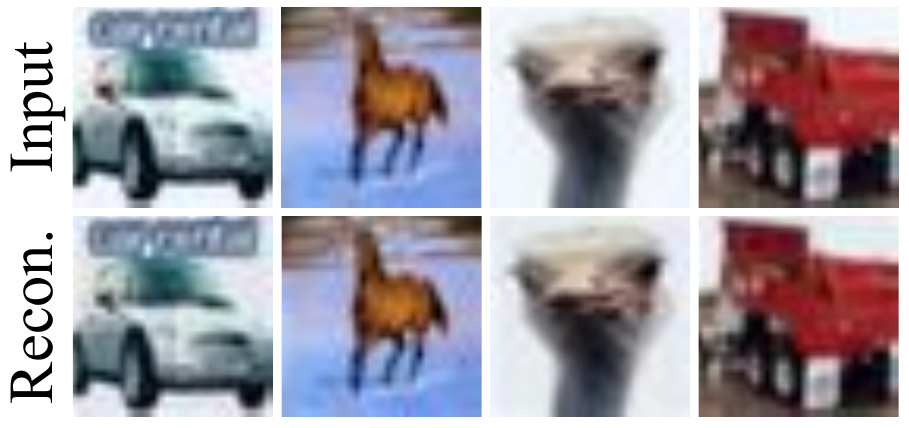}
    \vspace{-0.2cm}
    \caption{Reconstruction from the autoencoder.}
    \label{fig:reconstructed}
    \vspace{-0.3cm}
\end{wrapfigure}

We present the reconstructed images from the trained data autoencoder in Fig.~\ref{fig:reconstructed}, which are used to analyze the effectiveness of the flow loss as reported in Tab.~\ref{tab:twostage}. 
While the pretrained autoencoder reconstructs the images holistically with minimal information loss (Tab.~\ref{tab:twostage}), the embedding space learned without the flow loss fails to maintain the coupling with the predefined dynamics.

\subsection{Location of Crossing Points}
\label{app:where_crossing}

\begin{figure}[t]
    \centering
    \includegraphics[width=0.7\textwidth]{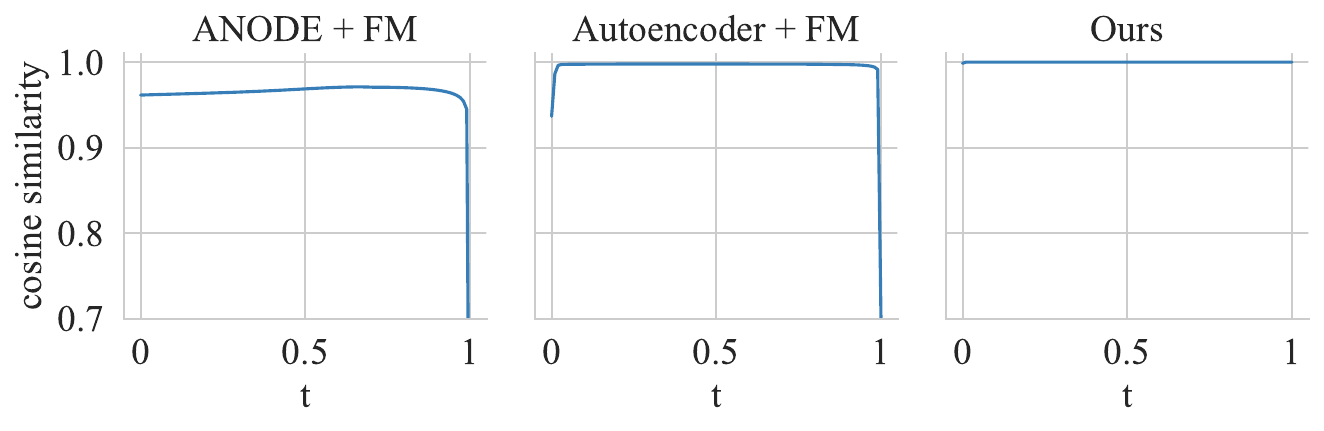}
    \vspace{-0.3cm}
    \caption{
    Cosine similarity between target velocity and predicted velocity over time $t$. Similar to the flow loss, we measure the cosine similarity between target velocity and predicted velocity. Low cosine similarity near $t=0$ and $t=1$ indicates the occurrence of target trajectory intersection near the endpoints.
    }
    \vspace{-0.4cm}
    \label{fig:where_crossing}
\end{figure}

In Fig.~\ref{fig:where_crossing}, we extend the analysis in Sec.~\ref{sec:analysis} to identify where target trajectory intersections occur. 
Specifically, we measure the cosine similarity between target velocity and predicted velocity across $t$.
By measuring cosine similarity instead of MSE, we can ignore the effect of absolute scale in the embedding space and accurately compare ANODE+FM, Autoencoder+FM, and our method.
Model variants that do not learn encoders with flow loss particularly suffer from trajectory crossing near both endpoints, showing low prediction accuracy for the direction of target velocity.
In contrast, our proposed method shows consistently high cosine similarity, mitigating the issue of target trajectory intersections near these regions.
Since both endpoints are constructed from encoders, the result also supports our claim that the encoders should be trained with flow loss to effectively penalize such intersections.

\subsection{Additional Results of Ablation on Optimization Techniques} \label{app:more_ablation}
We provide additional results on the optimization techniques in Fig.~\ref{fig:ablate_more}. Consistent with the findings on CIFAR10 (Fig.~\ref{fig:ablation_collapse}), the model without explicitly sampling at $t=0$ converges to suboptimal solutions, while introducing noise during label autoencoding improves test accuracy.

\begin{figure}[t]
    \centering
    \includegraphics[width=1.0\textwidth]{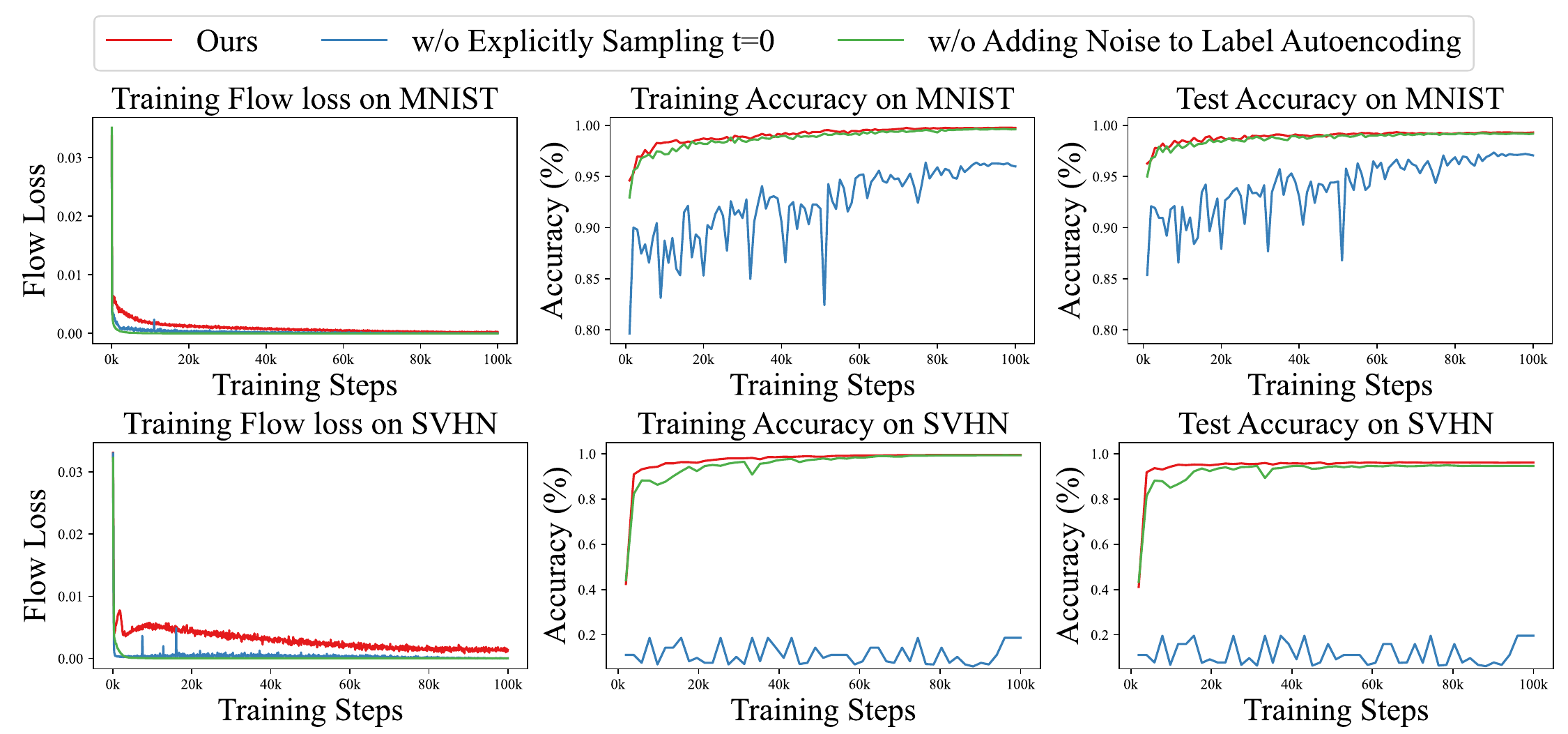}
    \vspace{-0.3cm}
    \caption{Ablation study of optimization techniques on MNIST and SVHN datasets, corresponding to Fig.~\ref{fig:ablation_collapse}. Explicitly sampling $t=0$ prevents suboptimal solutions, while adding noise to label autoencoding improves generalization performance.}
    \vspace{-0.2cm}
    \label{fig:ablate_more}
\end{figure}

\subsection{Additional Experiment Result for UCI Regression Tasks}
\label{app:regression_full}
The full results on UCI regression tasks with various solvers are shown in from Tab.~\ref{tab:UCI_1} to \ref{tab:UCI_dopri}.
As discussed in the main text, our method with linear dynamics assumption shows superior performance compared to baselines in low-NFE (1-2 steps) regime, which aligns with our observations in image classification experiments.

\begin{table}[t]
    \centering
    \caption{Experiment results on UCI regression tasks with Euler 1-step solver.}
    \label{tab:UCI_1}
    \footnotesize
    \begin{tabular}{c|ccccc}
    \toprule
    Dataset & Ours & NODE & STEER & RNODE & CARD \\
    \midrule
    Boston & 3.52$\pm$0.79 & 5.95$\pm$5.10 & 552.10$\pm$1638.54 & 3.50$\pm$0.84 & 130.23$\pm$38.91 \\
    Concrete & 5.59$\pm$0.58 & 2922.76$\pm$12615.38 & 1056.15$\pm$2607.27 & 5.98$\pm$1.66 & 198.91$\pm$72.58 \\
    Energy & 0.58$\pm$0.11 & 253.57$\pm$707.63 & 74.06$\pm$164.27 & 0.83$\pm$0.41 & 63.87$\pm$12.81 \\
    Kin8nm & 7.61$\pm$0.02 & 39.04$\pm$3.31 & 1694.59$\pm$700.19 & 23.69$\pm$0.82 & 238.18$\pm$6.44 \\
    Naval & 0.08$\pm$0.00 & 8447.16$\pm$2968.35 & 3011.13$\pm$907.95 & 0.16$\pm$0.01 & 15.42$\pm$0.59 \\
    Power & 4.03$\pm$0.19 & $>10^4$ & 7.37$\pm$14.16 & 4.06$\pm$0.35 & 179.91$\pm$65.08 \\
    Protein & 4.26$\pm$0.08 & 103.14$\pm$141.55 & 102.17$\pm$97.95 & 7.70$\pm$1.88 & 92.82$\pm$23.84 \\
    Wine & 0.67$\pm$0.05 & 1.99$\pm$4.31 & 19.08$\pm$36.97 & 0.67$\pm$0.05 & 10.42$\pm$2.87 \\
    Yacht & 0.92$\pm$0.42 & 62.19$\pm$108.46 & 29.06$\pm$41.13 & 2.71$\pm$1.94 & 111.21$\pm$62.53 \\
    Year & 9.15$\pm$NA & 1366.06$\pm$NA & 2452.93$\pm$NA & 22.37$\pm$NA & 157.11$\pm$NA \\
    
    \bottomrule
    \end{tabular}
\end{table}

\begin{table}[t]
    \centering
    \caption{Experiment results on UCI regression tasks with Euler 2-step solver.}
    \footnotesize
    \begin{tabular}{c|ccccc}
    \toprule
    Dataset & Ours & NODE & STEER & RNODE & CARD \\
    \midrule
    Boston & 3.50$\pm$0.78 & 3.81$\pm$2.04 & 278.23$\pm$827.35 & 3.41$\pm$0.81 & 20.86$\pm$12.26 \\
    Concrete & 5.58$\pm$0.57 & 1560.21$\pm$6301.49 & 1437.21$\pm$5229.38 & 5.59$\pm$0.80 & 28.49$\pm$17.08 \\
    Energy & 0.57$\pm$0.11 & 114.46$\pm$310.32 & 36.50$\pm$82.19 & 0.61$\pm$0.11 & 8.15$\pm$8.57 \\
    Kin8nm & 7.59$\pm$0.02 & 14.62$\pm$1.23 & 813.00$\pm$338.36 & 9.77$\pm$0.17 & 18.87$\pm$0.78 \\
    Naval & 0.07$\pm$0.00 & $>10^4$ & $>10^4$ & 0.08$\pm$0.00 & 1.81$\pm$0.22 \\
    Power & 4.03$\pm$0.19 & $>10^4$ & 5.46$\pm$6.52 & 3.86$\pm$0.23 & 19.78$\pm$10.59 \\
    Protein & 4.26$\pm$0.08 & 30.85$\pm$24.65 & 77.94$\pm$77.06 & 4.40$\pm$0.35 & 14.17$\pm$6.97 \\
    Wine & 0.67$\pm$0.05 & 3.50$\pm$10.84 & 24.95$\pm$46.23 & 0.66$\pm$0.04 & 1.49$\pm$0.64 \\
    Yacht & 0.93$\pm$0.43 & 34.66$\pm$62.06 & 57.13$\pm$131.04 & 1.79$\pm$1.01 & 11.83$\pm$19.40 \\
    Year & 9.17$\pm$NA & 6053.32$\pm$NA & 542.82$\pm$NA & 9.67$\pm$NA & 42.18$\pm$NA \\
    \bottomrule
    \end{tabular}
\end{table}

\begin{table}[t]
    \centering
    \caption{Experiment results on UCI regression tasks with Euler 10-step solver.}
    \footnotesize
    \begin{tabular}{c|ccccc}
    \toprule
    Dataset & Ours & NODE & STEER & RNODE & CARD \\
    \midrule
    Boston & 3.49$\pm$0.79 & 3.20$\pm$0.82 & 56.54$\pm$164.64 & 3.39$\pm$0.80 & 3.32$\pm$0.89 \\
    Concrete & 5.57$\pm$0.54 & 294.54$\pm$1260.67 & 88.00$\pm$190.75 & 5.59$\pm$0.77 & 5.47$\pm$0.67 \\
    Energy & 0.58$\pm$0.12 & 21.24$\pm$57.69 & 6.13$\pm$15.60 & 0.52$\pm$0.06 & 0.58$\pm$0.09 \\
    Kin8nm & 7.59$\pm$0.02 & 7.46$\pm$0.02 & 159.73$\pm$64.92 & 7.43$\pm$0.02 & 7.71$\pm$0.02 \\
    Naval & 0.07$\pm$0.00 & $>10^4$ & $>10^4$ & 0.03$\pm$0.00 & 0.09$\pm$0.00 \\
    Power & 4.03$\pm$0.18 & 2144.77$\pm$9543.08 & 4.10$\pm$0.71 & 3.82$\pm$0.21 & 4.57$\pm$0.19 \\
    Protein & 4.26$\pm$0.08 & 7.38$\pm$6.01 & 7.92$\pm$7.56 & 3.87$\pm$0.05 & 5.66$\pm$0.12 \\
    Wine & 0.68$\pm$0.05 & 0.82$\pm$0.49 & $>10^4$ & 0.66$\pm$0.04 & 0.75$\pm$0.05 \\
    Yacht & 0.96$\pm$0.45 & 2.58$\pm$6.37 & 1.86$\pm$1.93 & 1.21$\pm$0.60 & 0.95$\pm$0.34 \\
    Year & 9.25$\pm$NA & 139.85$\pm$NA & 69.53$\pm$NA & 8.87$\pm$NA & 10.85$\pm$NA \\
    \bottomrule
    \end{tabular}
\end{table}

\begin{table}[t]
    \centering
    \caption{Experiment results on UCI regression tasks with Euler 20-step solver.}
    \footnotesize
    \begin{tabular}{c|ccccc}
    \toprule
    Dataset & Ours & NODE & STEER & RNODE & CARD \\
    \midrule
    Boston & 3.49$\pm$0.79 & 3.20$\pm$0.82 & 29.29$\pm$80.54 & 3.39$\pm$0.80 & 3.32$\pm$0.89 \\
    Concrete & 5.58$\pm$0.55 & 148.43$\pm$629.86 & 42.13$\pm$85.51 & 5.59$\pm$0.77 & 5.43$\pm$0.66 \\
    Energy & 0.58$\pm$0.13 & 10.43$\pm$28.28 & 2.63$\pm$7.20 & 0.52$\pm$0.06 & 0.53$\pm$0.09 \\
    Kin8nm & 7.62$\pm$0.02 & 7.39$\pm$0.02 & 81.51$\pm$31.91 & 7.38$\pm$0.02 & 7.95$\pm$0.03 \\
    Naval & 0.07$\pm$0.00 & $>10^4$ & $>10^4$ & 0.03$\pm$0.00 & 0.09$\pm$0.00 \\
    Power & 4.04$\pm$0.18 & 1073.63$\pm$4769.25 & 3.97$\pm$0.19 & 3.82$\pm$0.21 & 4.81$\pm$0.18 \\
    Protein & 4.27$\pm$0.08 & 5.55$\pm$2.65 & 5.33$\pm$2.38 & 3.86$\pm$0.04 & 5.90$\pm$0.07 \\
    Wine & 0.68$\pm$0.05 & 0.71$\pm$0.17 & 1.10$\pm$0.72 & 0.66$\pm$0.04 & 0.75$\pm$0.06 \\
    Yacht & 0.96$\pm$0.46 & 1.29$\pm$0.90 & 1.35$\pm$0.67 & 1.17$\pm$0.60 & 0.89$\pm$0.36 \\
    Year & 9.30$\pm$NA & 40.97$\pm$NA & 33.25$\pm$NA & 8.86$\pm$NA & 11.40$\pm$NA \\
    \bottomrule
    \end{tabular}
\end{table}

\begin{table}[t]
    \centering
    \caption{Experiment results on UCI regression tasks with Euler 100-step solver.}
    \footnotesize
    \begin{tabular}{c|ccccc}
    \toprule
    Dataset & Ours & NODE & STEER & RNODE & CARD \\
    \midrule
    Boston & 3.49$\pm$0.80 & 3.20$\pm$0.82 & 7.58$\pm$13.13 & 3.40$\pm$0.80 & 3.44$\pm$0.97 \\
    Concrete & 5.63$\pm$0.55 & 33.72$\pm$124.75 & 10.81$\pm$13.46 & 5.60$\pm$0.77 & 5.59$\pm$0.61 \\
    Energy & 0.59$\pm$0.14 & 2.12$\pm$4.92 & 0.64$\pm$0.52 & 0.52$\pm$0.06 & 0.55$\pm$0.10 \\
    Kin8nm & 7.77$\pm$0.02 & 7.37$\pm$0.02 & 20.30$\pm$5.69 & 7.36$\pm$0.03 & 8.38$\pm$0.02 \\
    Naval & 0.07$\pm$0.00 & 2095.27$\pm$667.36 & 116.16$\pm$43.60 & 0.03$\pm$0.00 & 0.12$\pm$0.00 \\
    Power & 4.08$\pm$0.16 & 216.78$\pm$950.18 & 3.95$\pm$0.18 & 3.82$\pm$0.21 & 5.03$\pm$0.18 \\
    Protein & 4.30$\pm$0.09 & 4.38$\pm$0.29 & 4.20$\pm$0.06 & 3.86$\pm$0.04 & 6.13$\pm$0.09 \\
    Wine & 0.68$\pm$0.05 & 0.66$\pm$0.04 & 0.71$\pm$0.07 & 0.66$\pm$0.04 & 0.78$\pm$0.06 \\
    Yacht & 0.96$\pm$0.46 & 1.12$\pm$0.54 & 1.23$\pm$0.48 & 1.16$\pm$0.60 & 0.98$\pm$0.38 \\
    Year & 9.35$\pm$NA & 8.94$\pm$NA & 9.41$\pm$NA & 8.86$\pm$NA & 11.87$\pm$NA \\
    \bottomrule
    \end{tabular}
\end{table}
\begin{table}[t]
    \centering
    \caption{Experiment results on UCI regression tasks with Euler 1000-step solver.}
    \footnotesize
    \begin{tabular}{c|ccccc}
    \toprule
    Dataset & Ours & NODE & STEER & RNODE & CARD \\
    \midrule
    Boston & 3.50$\pm$0.80 & 3.20$\pm$0.82 & 3.44$\pm$0.69 & 3.40$\pm$0.80 & 3.34$\pm$0.90 \\
    Concrete & 5.66$\pm$0.55 & 8.29$\pm$11.12 & 5.70$\pm$0.58 & 5.60$\pm$0.77 & 5.56$\pm$0.52 \\
    Energy & 0.59$\pm$0.14 & 0.52$\pm$0.08 & 0.53$\pm$0.07 & 0.52$\pm$0.06 & 0.58$\pm$0.07 \\
    Kin8nm & 7.87$\pm$0.02 & 7.36$\pm$0.02 & 7.69$\pm$0.06 & 7.36$\pm$0.03 & 8.53$\pm$0.03 \\
    Naval & 0.07$\pm$0.00 & 19.23$\pm$5.63 & 10.21$\pm$3.75 & 0.03$\pm$0.00 & 0.13$\pm$0.00 \\
    Power & 4.14$\pm$0.14 & 24.24$\pm$90.84 & 3.95$\pm$0.18 & 3.82$\pm$0.21 & 5.15$\pm$0.12 \\
    Protein & 4.36$\pm$0.09 & 4.33$\pm$0.23 & 4.20$\pm$0.06 & 3.86$\pm$0.04 & 6.17$\pm$0.11 \\
    Wine & 0.69$\pm$0.05 & 0.66$\pm$0.04 & 0.70$\pm$0.05 & 0.66$\pm$0.04 & 0.76$\pm$0.06 \\
    Yacht & 0.96$\pm$0.46 & 1.12$\pm$0.54 & 1.23$\pm$0.48 & 1.16$\pm$0.60 & 0.95$\pm$0.34 \\
    Year & 9.39$\pm$NA & 8.95$\pm$NA & 8.95$\pm$NA & 8.86$\pm$NA & 12.05$\pm$NA \\
    \bottomrule
    \end{tabular}
\end{table}

\begin{table}[ht!]
    \centering
    \caption{Experiment results on UCI regression tasks with dopri5 solver.}
    \footnotesize
    \begin{tabular}{c|ccccc}
    \toprule
    Dataset & Ours & NODE & STEER & RNODE & CARD \\
    \midrule
    Boston & 3.50$\pm$0.80 & 3.20$\pm$0.81 & 3.44$\pm$0.69 & 3.40$\pm$0.80 & 3.34$\pm$0.90 \\
    Concrete & 5.67$\pm$0.55 & 5.76$\pm$0.69 & 5.70$\pm$0.57 & 5.60$\pm$0.77 & 5.56$\pm$0.52 \\
    Energy & 0.60$\pm$0.14 & 0.52$\pm$0.08 & 0.53$\pm$0.07 & 0.51$\pm$0.06 & 0.58$\pm$0.07 \\
    Kin8nm & 7.89$\pm$0.02 & 7.35$\pm$0.02 & 7.56$\pm$0.02 & 7.37$\pm$0.03 & 8.53$\pm$0.03 \\
    Naval & 0.07$\pm$0.00 & 0.05$\pm$0.00 & 0.10$\pm$0.00 & 0.03$\pm$0.00 & 0.13$\pm$0.00 \\
    Power & 4.17$\pm$0.14 & 3.91$\pm$0.19 & 3.95$\pm$0.17 & 3.83$\pm$0.21 & 5.15$\pm$0.12 \\
    Protein & 4.37$\pm$0.11 & 4.33$\pm$0.23 & 4.20$\pm$0.06 & 3.86$\pm$0.04 & 6.17$\pm$0.11 \\
    Wine & 0.69$\pm$0.05 & 0.66$\pm$0.04 & 0.70$\pm$0.05 & 0.66$\pm$0.04 & 0.76$\pm$0.06 \\
    Yacht & 0.96$\pm$0.46 & 1.12$\pm$0.54 & 1.23$\pm$0.48 & 1.16$\pm$0.60 & 0.95$\pm$0.34 \\
    Year & 9.41$\pm$NA & 8.95$\pm$NA & 8.95$\pm$NA & 8.88$\pm$NA & 12.05$\pm$NA \\
    \bottomrule
    \end{tabular}
    \label{tab:UCI_dopri}
\end{table}

\clearpage

\end{document}